\definecolor{light-gray}{gray}{0.92}
\definecolor{dark-gray}{gray}{0.20}
\newcommand{\valuepm}[2]{$\text{#1}_{\pm {\text{#2}}}$}
\newcommand{\Mat}[1]{\bm{#1}}
\newcommand{\T}{\top}
\newcommand{\dataset}[1]{\mathcal{#1}}
\newcommand{\argmin}[1]{\underset{#1}{\operatorname{argmin}}~}
\newcommand{\ReLU}{\operatorname{ReLU}}
\newcommand{\X}{\Mat{X}}
\newcommand{\email}[1]{\href{mailto:#1}{#1}}
\theoremstyle{plain}
\newtheorem{theorem}{Theorem}[section]
\newtheorem{property}[theorem]{Property}
\title{GACL: Exemplar-Free Generalized Analytic Continual Learning}
\newcommand{\affiliationID}[1]{\textsuperscript{\rm{#1}}}
\author{%
    Huiping Zhuang\affiliationID{1}\thanks{These authors contribute equally.}\hspace{1em}%
    Yizhu Chen\affiliationID{1}\footnotemark[\value{footnote}]\hspace{1em}%
    Di Fang\affiliationID{1}\hspace{1em}%
    Run He\affiliationID{1}\hspace{1em}%
    Kai Tong\affiliationID{1}\\
    \textbf{
    Hongxin Wei\affiliationID{2}\hspace{1em}%
    Ziqian Zeng\affiliationID{1}\thanks{Corresponding authors: Ziqian Zeng (\email{zqzeng@scut.edu.cn}) and Cen Chen (\email{chencen@scut.edu.cn}).}\hspace{1em}%
    Cen Chen\affiliationID{1,3,4}\footnotemark[\value{footnote}]
    }\\
    \affiliationID{1}South China University of Technology, China\\
    \affiliationID{2}Southern University of Science and Technology, China\\
    \affiliationID{3}Shenzhen Institute, Hunan University, China\\
    \affiliationID{4}Pazhou Lab, China\\
    \vspace{-2em}
}
\begin{document}

\maketitle

\begin{abstract}
    Class incremental learning (CIL) trains a network on sequential tasks with separated categories in each task but suffers from catastrophic forgetting, where models quickly lose previously learned knowledge when acquiring new tasks. The generalized CIL (GCIL) aims to address the CIL problem in a more real-world scenario, where incoming data have mixed data categories and unknown sample size distribution. Existing attempts for the GCIL either have poor performance or invade data privacy by saving exemplars. In this paper, we propose a new exemplar-free GCIL technique named generalized analytic continual learning (GACL). The GACL adopts analytic learning (a gradient-free training technique) and delivers an analytical  (i.e., closed-form) solution to the GCIL scenario. This solution is derived via decomposing the incoming data into exposed and unexposed classes, thereby attaining a weight-invariant property, a rare yet valuable property supporting an equivalence between incremental learning and its joint training. Such an equivalence is crucial in GCIL settings as data distributions among different tasks no longer pose challenges to adopting our GACL. Theoretically, this equivalence property is validated through matrix analysis tools. Empirically, we conduct extensive experiments where, compared with existing GCIL methods, our GACL exhibits a consistently leading performance across various datasets and GCIL settings. Source code is available at \url{https://github.com/CHEN-YIZHU/GACL}.\vspace{-0.5em}
\end{abstract}

\section{Introduction}
    Class incremental learning (CIL) \cite{iCaRL_2017_CVPR}, an important form of continual learning, aims to effectively tune an off-the-shelf network on incoming new datasets, with data excluding various categories from its previous states. The CIL has gained significant traction due to its ability to refine learned models for new and unfamiliar data classes, eliminating the need to start the training process from scratch. This elimination of retraining saves valuable computational resources, which is especially important in the era of pre-trained models that have absorbed a massive amount of data.

    One significant challenge in CIL is \textit{catastrophic forgetting} \cite{CF_1989_PLM, CF_2013_arXiV}, which causes trained models to lose existing knowledge when gaining new information quickly. This can be attributed to the fundamental property of gradient-based iterative algorithms that impose a \textit{task-recency} bias, i.e., predictions favor recently updated categories \cite{LUCIR_2019_CVPR}. To the authors' knowledge, no solutions exist for these gradient-trained CIL models to fully tackle catastrophic forgetting. 

    On the other hand, traditional CIL assumes that the number of samples in each task is fixed and that new tasks are entirely disjoint from previous ones. This paradigm does not align with real-world scenarios, where training data may include both new and previously encountered categories, and the number of data points often exhibits arbitrariness in each task. This extended CIL setting is referred to as generalized CIL (GCIL) \cite{BlurryM_2019_NeurIPS, GCIL_2020_CVPR_Workshops}. Such an uneven task-wise distribution of training samples and data categories further complicates the forgetting issue. For instance, GCIL may lead to the neglect of minority samples within a batch, thereby undermining representation during the training process.
    
    \par To mitigate catastrophic forgetting, a simple but effective approach is to replay historical samples. Replay-based CIL \cite{iCaRL_2017_CVPR, LUCIR_2019_CVPR} mitigates forgetting by storing a small number of samples from historical categories for the model to review while learning new information. However, this replay mechanism poses risks to data privacy. Thus, the exemplar-free CIL (EFCIL) without saving old exemplars gains prominence due to the increasing concern for privacy. However, many EFCIL methods perform poorly due to the task-recency bias caused by the nature of gradient-based algorithms \cite{LUCIR_2019_CVPR}. Recently, this dilemma has been alleviated by the analytic continual learning (ACL) \cite{ACIL2022NeurIPS, GKEAL2023CVPR}, an emerging EFCIL branch that first achieves comparable or even more competitive performance over the replay-based CIL. This improvement occurs because, for the first time, ACL achieves a near ``complete non-forgetting'' by allowing an equivalence between the incremental learning and its joint training (i.e., the weight-invariant property).

    The ACL provides a powerful toolbox for traditional EFCIL scenarios where data categories among training tasks are mutually exclusive. However, an apparent gap exists between the existing ACL techniques and the more desired and real-world GCIL scenario. Exploring the possibility of incorporating the weight-invariant property into the GCIL framework is both a significant and natural motivation, as it has the potential to enhance overall performance. To achieve this, we propose a generalized analytic continual learning (GACL), a new and compensated ACL member, offering a weight-invariant property solution to the GCIL. The key contributions are summarized as follows.
    \begin{itemize}
        \item We present the GACL, an exemplar-free technique that achieves the equivalence between the GCIL (with split incoming data) and its joint training (with data centralized in a single task).
        \item We theoretically establish the GACL's weight-invariant property. It is achieved and proved by separating the incoming data into exposed and unexposed components and aligning them structurally with matrix decomposition techniques.
        \item We isolate the distinctive component of the GACL, namely the \textit{exposed class label gain} (ECLG), from the existing ACL. This module explains the feasibility of achieving GCIL's analytic learning, offering a high interpretability in the GCIL realm.
        \item Experiments on various benchmark datasets are presented, showing that the GACL outperforms the existing EFCIL by a large margin. It also exceeds most state-of-the-art replay-based methods.
    \end{itemize}

\section{Related Works}
    \par This section reviews existing methods for CIL and its more real-world counterpart, i.e., GCIL.

    \subsection{CIL Techniques}
\par Existing CIL methods can be roughly divided into three categories: replay-based methods, regularization-based methods, and prototype-based methods.

        \par The \textit{replay-based CIL} methods such as iCaRL \cite{iCaRL_2017_CVPR}, LUCIR \cite{LUCIR_2019_CVPR}, PODNet \cite{PODNet_2020_ECCV}, AANets \cite{AANets_2021_CVPR}, FOSTER \cite{FOSTER_2022_ECCV}, and OHO \cite{OHO_2023_AAAI}, retain past training samples as exemplars and utilize them during the learning of new ones. However, storing original training samples presents a significant challenge, particularly in scenarios with strict data privacy requirements.

        \par The \textit{regularization-based CIL} aims to design a loss function that prevents the change of activations or important weights. Methods such as the Less-forgetting learning \cite{LessForgetting_2016_arXiv} and the LwF \cite{LwF2018TPAMI} introduce knowledge distillation \cite{KD_Hinton_arXiv2015} into their loss function to prevent the forgetting caused by activation drift. EWC \cite{EWC_2017_PNAS}, EWC++, RWalk \cite{RWalk_2018_ECCV}, and Rotate your Networks \cite{RN_2018_ICPR}, introduce regularization that slows down learning on the weights important for old tasks by calculating the Fisher information matrix.

        \par The \textit{prototype-based CIL} maintains distinct prototypes for each category, which prevents overlapping representations of new and old categories. For example, the PASS \cite{PASS_2021_CVPR} distinguishes prior categories by augmenting feature prototypes. The SSRE technique \cite{SSRE_2022_CVPR} enhances the dissimilarity between old and new categories via selecting prototypes to incorporate with new samples into a distillation process. The FeTrIL \cite{FeTrIL_2023_WACV} uses new representations to generate pseudo-features of old categories.

    \subsection{Analytic Continual Learning}
        \par The ACL is a recently developed EFCIL branch inspired by the analytic learning \cite{PIL_2004_Neurocomputing, BRMP_2022, ACnnL_2022_arXiv} where the training of neural networks yields a closed-form solution using least squares. The ACIL \cite{ACIL2022NeurIPS} first converts a continual learning problem to a batch recursive least-squares problem, eliminating the need to store samples by preserving the correlation matrix, and the RanPAC \cite{RanPAC_McDonnell_NeurIPS2023} applies this trick to pre-trained models. The GKEAL \cite{GKEAL2023CVPR} focuses on the few-shot CIL scenarios by leveraging a Gaussian kernel projection. The DS-AL \cite{Zhuang_DSAL_AAAI2024} introduces an additional linear classifier to learn the residue of the ACIL to enhance the plasticity, while the REAL \cite{REAL_He_arXiv2024} introduces the representation enhancing distillation to improve the backbone's generalization capabilities. The AFL \cite{AFL_zhuang2024} extends the ACL to federated learning, transitioning from temporal increment to spatial increment, and similar techniques are applied to the reinforcement learning \cite{locality_2024_iclr}. The ACL is an emerging competitive CIL branch with a closed-form solution that leads to a valuable weight-invariant property, securing the equivalence between CIL and its joint learning. However, existing ACL methods are designed for the CIL scenario in which the categories of samples in each task must be entirely distinct. This restricts their applicability in real-world scenarios.

    \subsection{The Generalized Class Incremental Learning}\label{sec:GCIL}
        \par The GCIL simulates real-world incremental learning, as distributions of data category and size could be unknown in one task. The GCIL arouses problems such as intra- and inter-task forgettings and the class imbalance problem \cite{Siblurry2023ICCV}. The key GCIL properties can be summarized as follows: (i) the number of classes across different tasks is not fixed; (ii) classes shown in prior tasks could reappear in later tasks; (iii) training samples are imbalanced across different classes in each task \cite{GCIL_2020_CVPR_Workshops} (See Appendix \ref{app:GCIL}).

\par There are several GCIL settings. In the BlurryM setting \cite{BlurryM_2019_NeurIPS}, $a\%$ of the classes are disjoint between tasks, while the remaining classes appear in every task. The i-Blurry-N-M \cite{CLIB_2022_ICLR} setting has blurry task boundaries and requires the model to perform anytime inference. However, the i-Blurry scenario has a fixed number of classes in each task with the same proportion of new and old classes. The Si-Blurry \cite{Siblurry2023ICCV} is the most complex and realistic GCIL setting satisfying all three GCIL properties since it has an ever-changing number of classes and is capable of effectively simulating newly emerging or disappearing data, highlighting the problem of uneven distribution in real-world scenarios.

        \par To address the issue of the GCIL, gradient-based sample selection methods such as the GSS-IQP and the GSS-Greedy are proposed by \cite{BlurryM_2019_NeurIPS}. The RM \cite{RM2021CVPR} proposes a memory management strategy based on per-sample classification uncertainty and data augmentation, while the management in the CLIB \cite{CLIB_2022_ICLR} eliminates samples based on a per-sample importance score. The DualPrompt \cite{dualprompt_2022}, as an EFCIL method, introduces the prompt-based learning to the CIL problem, and the MVP \cite{Siblurry2023ICCV} proposes an instance-wise logit masking and contrastive visual prompt tuning loss.

\section{The Proposed Method}
    \begin{figure*}[t]
        \centering
        \includegraphics[width=\linewidth]{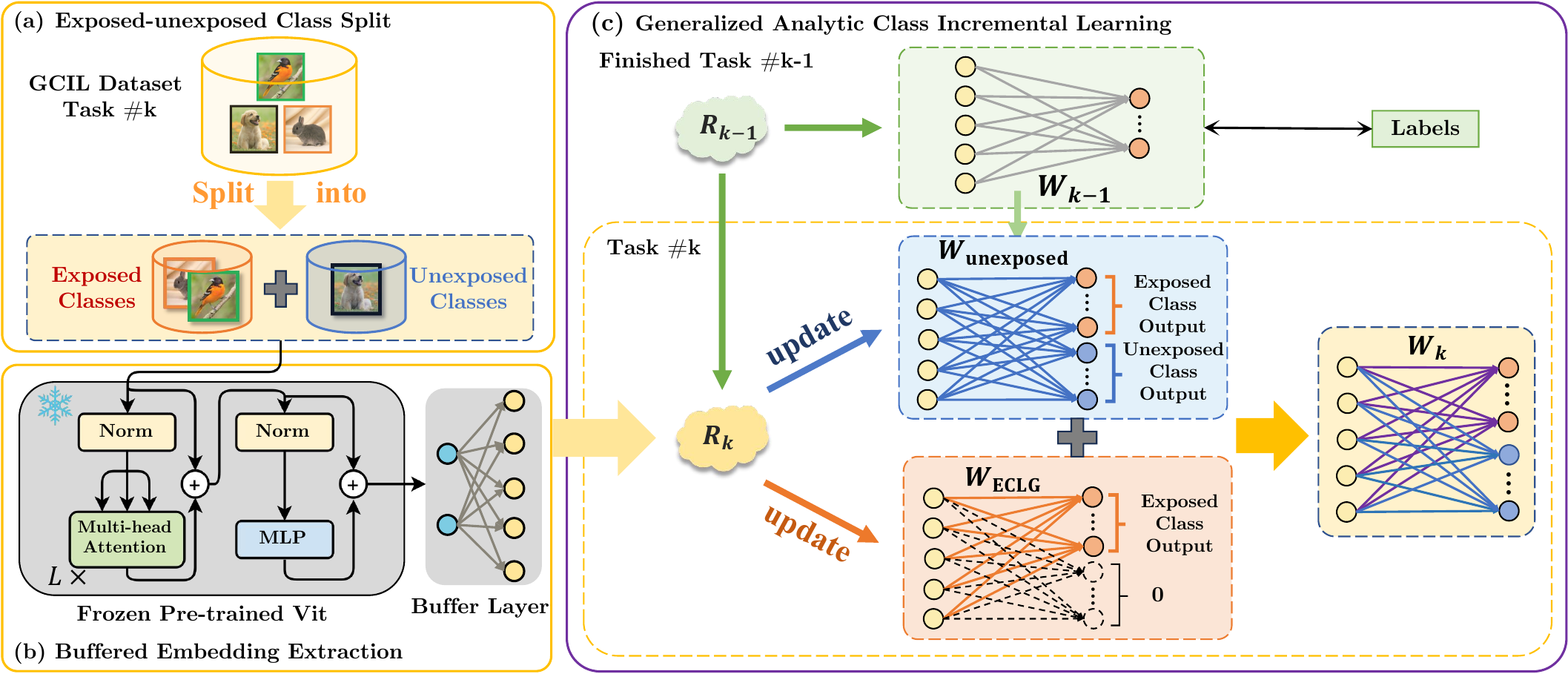}
        \caption{An overview of our proposed GACL. (a) Labels of the \textit{exposed class} and the \textit{unexposed class} are extracted in each GCIL task (see definition in Section \ref{section:split}), respectively. (b) A frozen pre-trained ViT and a buffer layer are utilized to extract features from the inputs. (c) The key to the recursively updated formulation of the GACL contains two components. The $\bm{\hat W}_{\textup{unexposed}}^{(k)}$ takes in the contribution of unexposed class data (see \eqref{eq_w_unexposed}). The other is contributed by the ECLG module $\bm{\hat W}_{\textup{ECLG}}^{(k)}$ (e.g., see \eqref{eq_w_eclg}), which reflects the gain of exposed class data on the seen categories. The recursive formulation flows aided by the \textit{autocorrelation memory matrix} $\bm{R}$ throughout the GCIL.}\label{fig:mainflow}
        \vskip -0.1in
    \end{figure*}
    
    In this section, we deliver details of the proposed GACL. We first define the learning problem. Then, we derive the GACL by employing matrix decomposition techniques. A corresponding theoretical analysis follows to indicate the interpretability of our work. An overview is depicted in Figure \ref{fig:mainflow}.

\subsection{Problem Definition}
    We denote the complete set of available data as $\dataset{D}$. When $\dataset{D}$ is partitioned into a sequence of GCIL tasks, we assume that $\dataset{D}_{k}^{\text{train}} \sim \{\bm{X}_{k}^{\text{train}}, \bm{Y}_{k}^{\text{train}}\}$ is the set of training samples that are present in task $k$.
    The training dataset $\dataset{D}_{k}^{\text{train}}$ consists of labeled samples, where $\bm{X}_{k}^{\text{train}}\in\mathbb{R}^{N_{k}\times c\times w \times h}$ represents $N_{k}$ input image samples with a shape of $c \times w \times h$. $\bm{Y}_{k}^{\text{train}}\in\mathbb{R}^{N_{k}\times d_{y_k}}$ represents $N_{k}$-stacked one-hot encoded label tensors with $d_{y_k}$ classes that have been seen from task $1$ to task $k$. $\dataset{D}_{k}^{\text{test}}\sim \{\bm{X}_{k}^{\text{test}}, \bm{Y}_{k}^{\text{test}}\}$ is the test dataset in task $k$. The goal of GCIL in task $k$ is to train networks using $\dataset{D}_{k}^{\text{train}}$ and evaluate their performance on the test dataset $\dataset{D}_{1:k}^{\text{test}}$. Here, $\dataset{D}_{1:k}$ denotes the joint dataset spanning tasks $1$ to $k$.

\subsection{Exposed-unexposed Class Split}\label{section:split}
    In each GCIL task, classes may not appear exclusively. Hence, in any GCIL task $k$, we refer to classes that have appeared in previous tasks 1 to $k-1$ as the \textit{exposed classes} of task $k$, while classes making their initial appearance are the \textit{unexposed classes} of task $k$ as shown in Figure \ref{fig:mainflow} (a). This distinction helps to characterize the evolving nature of class occurrences throughout different GCIL tasks.
    
In a task-wise GCIL scenario, we can involve all class labels in a set $\mathcal{S}$. In task $k$, the set of the exposed class labels is denoted as $ \mathcal S_{\text{exposed, k}} \subseteq \mathcal{S}$, while the set of unexposed class labels is marked by $ \mathcal S_{\text{unexposed, k}} \subseteq \mathcal{S} $, where $\mathcal{S}_{\text{exposed, k}} \cap \mathcal{S}_{\text{unexposed, k}} = \varnothing$. Note that $ \mathcal S_{\text{exposed, k}} $ and $ \mathcal S_{\text{unexposed, k}}$ may evolve from task $k-1$ to task $k$, that is
    \begin{align*}
        \mathcal{S}_{\text{exposed, k}}  = S_{\text{unexposed, k-1}} \cup \mathcal{S}_{\text{exposed, k-1}} = S_{\text{unexposed, k-1}} \cup S_{\text{unexposed, k-2}} \ldots \cup \mathcal{S}_{\text{unexposed, 1}}.
    \end{align*}

    From the scope of exposed-unexposed classes, the $d_{y_{k}}$ can be represented as $d_{y_{k}} = \left | \mathcal S_{\text{exposed, k}} \right |+ \left |  \mathcal S_{\text{unexposed, k}} \right | = d_{y_{k-1}}  + \left |  \mathcal S_{\text{unexposed, k}}\right |$, where $\left |\cdot\right |$ denotes the cardinality of a set.
    
    In task $k$, given training dataset $\dataset{D}_{k}^{\text{train}} \sim \{\bm{X}_{k}^{\text{train}}, \bm{Y}_{k}^{\text{train}}\}$, class labels $\bm{Y}_{k}^{\text{train}}$ can be partitioned due to the \textit{exposed-unexposed split} as follows:
    \begin{align}
    	\bm{Y}_{k}^{\text{train}} = \begin{bmatrix}
    		\bm {\bar Y}_{k}^{\text{train}} & \bm {\tilde Y} _{k}^{\text{train}} 
    	\end{bmatrix}, 
    \end{align}
    where $\bm{\bar Y}_{k}^{\text{train}}\in\mathbb{R}^{N_k \times d_{y_{k-1}}}$ is the \textit{exposed class label matrix} and ${\tilde{\bm{Y}}_{k}}^{\text{train}}\in\mathbb{R}^{N_k \times (d_{y_{k}} - d_{y_{k-1}})}$ is the \textit{unexposed class label matrix}. They correspond to segments displaying the appearance of exposed classes and unexposed classes. 

\subsection{Buffered Embedding Extraction}\label{sec:buffered_embeddings}
    The power of pre-trained models allows the GACL to adopt a frozen backbone from structures such as ViT \cite{ViT_2021_ICLR} to extract the features of images shown in Figure \ref{fig:mainflow} (b). Let
    \begin{equation}
        \X^{(\text{E})} = f_{\text{backbone}}(\bm{X}, \bm{\Theta}_{\text{backbone}})
        \label{eq: vitbackbone}
    \end{equation}
    be the features extracted by the backbone, where $\bm{\Theta}_{\text{backbone}}$ indicates the backbone weight. Then, we use a buffer layer to project features, i.e., 
    \begin{equation}\label{eq:buffer}
        \bm{X}_{i}^{\text{(B)}} = f_{\text{buffer}}(\bm{X}^{(\text{E})}),
    \end{equation}
    where $f_{\text{buffer}}$ indicates the operation of the buffer layer. Several options for the buffer layer exist, including a randomly initialized linear mapping in the ACIL \cite{ACIL2022NeurIPS} or a kernel embedding projection in the GKEAL \cite{GKEAL2023CVPR}. The selection of the buffer layer is not our focus. For convenience, we follow the ACIL, taking the random linear projection followed by a non-linear activation function as the buffer layer, i.e. $f_{\text{buffer}}(\bm{X}^{(\text{E})}) = \ReLU(\X^{(\text{E})}\bm{W}_{\text{B}})$, where the elements of the buffer layer weight $\bm{W}_{\text{B}}$ are randomly sampled from a normal distribution.

\subsection{Generalized Analytic Class Incremental Learning}
    Here, we derive the GACL by partitioning training samples into unexposed and exposed categories, as shown in Figure \ref{fig:mainflow} (c). Let $\bm{X}_{1:k}^{\text{total}}$ and $\bm{Y}_{1:k}^{\text{total}}$ be the accumulated feature and label matrices in task $k$, which can be extended from the accumulated matrices $\bm{X}_{1:k-1}^{\text{total}}$ and $\bm{Y}_{1:k-1}^{\text{total}}$ in task $k-1$ as follows.
    \begin{align*}
    	\bm{X}_{1:k}^{\text{total}} = \begin{bmatrix}
    		\bm{X}_{1:k-1}^{\text{total}}\\
    		\bm{X}_{k}^{\text{(B)}}
    	\end{bmatrix}, \quad
    	\bm{Y}_{1:k}^{\text{total}} = \begin{bmatrix}
    		\bm Y_{1:k-1}^{\text{total}} & \bm{0}\\
    		\bm {\bar Y}_{k}^{\text{train}} & \bm {\tilde{Y}}_{k}^{\text{train}} 
    	\end{bmatrix}.
    \end{align*}

    Subsequently, one could formulate the learning problem in task $k$ by a fully connected network (FCN) as the classifier
    \begin{align}\label{eq_loss_k-1}
         \argmin{\bm{W}_{\text{FCN}}^{(k)}} \left\lVert\bm{Y}_{1:k}^{\text{total}} -
        \bm{X}_{1:k}^{\text{total}} \bm{W}_{\text{FCN}}^{(k)}\right\rVert_{\text{F}}^{2} + {\gamma} \left\lVert\bm{W}_{\text{FCN}}^{(k)}\right\rVert_{\text{F}}^{2},
    \end{align}
    where  $\lVert\cdot\rVert_{\text{F}}$ is Frobenius-norm, $\gamma \ge 0$ is the regularization term and $\bm{W}_{\text{FCN}}^{(k)}$ indicates the FCN layer weight. The optimal solution to \eqref{eq_loss_k-1} is
    \begin{align}\label{eq_ls_w_k-1}
    	\bm{\hat W}_{\text{FCN}}^{(k)} = (\bm{X}_{1:k}^{\text{total}\T}\bm{X}_{1:k}^{\text{total}}+\gamma \bm{I})^{-1}\bm{X}_{1:k}^{\text{total}\T}\bm{Y}_{1:k}^{\text{total}}.
    \end{align}
    
    The goal of the GACL is then to obtain $\bm{\hat W}_{\text{FCN}}^{(k)}$ recursively from $\bm{\hat W}_{\text{FCN}}^{(k-1)}$ without directly involving historical samples (e.g., $\bm{X}_{1:k-1}^{\text{total}}$ and $\bm{Y}_{1:k-1}^{\text{total}}$). That is to solve
    \begin{align}\label{eq_loss_k}
    \argmin{\bm{W}_{\text{FCN}}^{(k)}} \left\lVert \begin{bmatrix}
         \bm Y_{1:k-1}^{\text{total}} & \bm{0}\\
        \bm {\bar Y}_{k}^{\text{train}} & {\tilde{\bm{Y}}_{k}}^{\text{train}}
    \end{bmatrix} - \begin{bmatrix}
        \bm{X}_{1:k-1}^{\text{total}}\\
        \bm{X}_{k}^{\text{(B)}}
    \end{bmatrix} \bm{W}_{\text{FCN}}^{(k)}\right\rVert_{\text{F}}^{2} + {\gamma} \left\lVert\bm{W}_{\text{FCN}}^{(k)}\right\rVert_{\text{F}}^{2}
    \end{align}
    by recursively updating the previous-task weight $\bm{\hat W}_{\text{FCN}}^{(k)}$. To achieve this, we define an \textit{autocorrelation memory matrix} as follows.
    \begin{align}\label{eq_R_k-1}
    	 \bm{R}_{k} =(\bm{X}_{1:k}^{\text{total}\T} \bm{X}_{1:k}^{\text{total}} +\gamma \bm{I})^{-1}.
    \end{align}
    Accordingly, we summarize the recursive formulation of the proposed GACL in Theorem \ref{thm:gef_1}.
    \begin{theorem}\label{thm:gef_1}
    	Let $\bm{\hat W}_{\textup{FCN}}^{(k)}$ be the optimal estimation of \eqref{eq_loss_k} with all the training data from task $1$ to task $k$. Then $\bm{\hat W}_{\textup{FCN}}^{(k)}$ is equivalent to its recursive form
    	\begin{align}\label{eq_w_update}
        \bm{\hat W}_{\textup{FCN}}^{(k)}  = \begin{bmatrix}
                \bm{\hat W}_{\textup{FCN}}^{(k-1)}- \bm{R}_{k}\bm{X}_{k}^{\textup{(B)} \T}\bm{X}_{k}^{\textup{(B)}}\bm{\hat W}_{\textup{FCN}}^{(k-1)}+ \bm{R}_{k}\bm X_{k}^{\textup{(B)}\T}\bm {\bar Y}_{k}^{\textup{train}} &
                \bm{R}_{k}\bm X_{k}^{\textup{(B)}\T} \bm {\tilde{Y}}_{k}^{\textup{train}}\end{bmatrix},
    	\end{align}
    	where
    	\begin{align}\label{eq_R_update2}
    \bm{R}_{k} = \bm{R}_{k-1} - \bm{R}_{k-1}\bm{X}_{k}^{\textup{(B)}\T}(\bm{I} + \bm{X}_{k}^{\textup{(B)}}\bm{R}_{k-1}\bm{X}_{k}^{\textup{(B)}\T})^{-1}\bm{X}_{k}^{\textup{(B)}}\bm{R}_{k-1}.
    	\end{align}
    \end{theorem}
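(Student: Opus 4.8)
The plan is to work directly from the regularized least-squares solution \eqref{eq_ls_w_k-1}, written compactly as $\bm{\hat W}_{\textup{FCN}}^{(k)} = \bm{R}_{k}\,\bm{X}_{1:k}^{\textup{total}\T}\bm{Y}_{1:k}^{\textup{total}}$ via the memory matrix \eqref{eq_R_k-1}, and to verify that the recursive expression \eqref{eq_w_update} reproduces the same matrix. First I would substitute the block forms of $\bm{X}_{1:k}^{\textup{total}}$ and $\bm{Y}_{1:k}^{\textup{total}}$ and expand the product, which collapses to the single block row $\begin{bmatrix} \bm{X}_{1:k-1}^{\textup{total}\T}\bm{Y}_{1:k-1}^{\textup{total}} + \bm{X}_{k}^{\textup{(B)}\T}\bm{\bar Y}_{k}^{\textup{train}} & \bm{X}_{k}^{\textup{(B)}\T}\bm{\tilde Y}_{k}^{\textup{train}} \end{bmatrix}$. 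After left-multiplication by $\bm{R}_{k}$, the second (unexposed) column block already equals the stated $\bm{R}_{k}\bm{X}_{k}^{\textup{(B)}\T}\bm{\tilde Y}_{k}^{\textup{train}}$, so the entire burden falls on the exposed column block.

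For the exposed block I would peel off the term $\bm{R}_{k}\bm{X}_{k}^{\textup{(B)}\T}\bm{\bar Y}_{k}^{\textup{train}}$ — the ECLG contribution, which appears verbatim in \eqref{eq_w_update} — reducing the claim to the identity $\bm{R}_{k}\bm{X}_{1:k-1}^{\textup{total}\T}\bm{Y}_{1:k-1}^{\textup{total}} = (\bm{I} - \bm{R}_{k}\bm{X}_{k}^{\textup{(B)}\T}\bm{X}_{k}^{\textup{(B)}})\,\bm{\hat W}_{\textup{FCN}}^{(k-1)}$. Since $\bm{\hat W}_{\textup{FCN}}^{(k-1)} = \bm{R}_{k-1}\bm{X}_{1:k-1}^{\textup{total}\T}\bm{Y}_{1:k-1}^{\textup{total}}$ by \eqref{eq_ls_w_k-1} at step $k-1$, and the factor $\bm{X}_{1:k-1}^{\textup{total}\T}\bm{Y}_{1:k-1}^{\textup{total}}$ is common to both sides, it suffices to prove the purely matrix-level identity $\bm{R}_{k} = (\bm{I} - \bm{R}_{k}\bm{X}_{k}^{\textup{(B)}\T}\bm{X}_{k}^{\textup{(B)}})\bm{R}_{k-1}$. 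I would establish this by left-multiplying by $\bm{R}_{k}^{-1}$ and invoking the additive update $\bm{R}_{k}^{-1} = \bm{R}_{k-1}^{-1} + \bm{X}_{k}^{\textup{(B)}\T}\bm{X}_{k}^{\textup{(B)}}$, which collapses the right-hand side to $\bm{R}_{k-1}^{-1}\bm{R}_{k-1} = \bm{I}$.

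The recursion \eqref{eq_R_update2} for $\bm{R}_{k}$ I would derive separately and beforehand, since the additive relation above depends on it. Writing $\bm{R}_{k}^{-1} = \bm{R}_{k-1}^{-1} + \bm{X}_{k}^{\textup{(B)}\T}\bm{X}_{k}^{\textup{(B)}}$ — immediate from $\bm{X}_{1:k}^{\textup{total}\T}\bm{X}_{1:k}^{\textup{total}} = \bm{X}_{1:k-1}^{\textup{total}\T}\bm{X}_{1:k-1}^{\textup{total}} + \bm{X}_{k}^{\textup{(B)}\T}\bm{X}_{k}^{\textup{(B)}}$ — and applying the Woodbury identity with inner term $\bm{I}$ yields \eqref{eq_R_update2} directly.

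The step I expect to be delicate is not the algebra, which is a single Woodbury inversion plus one cancellation, but the dimensional bookkeeping induced by the exposed-unexposed split. The historical labels $\bm{Y}_{1:k-1}^{\textup{total}}$ are zero-padded into the $d_{y_{k}}$-column space, and $\bm{\hat W}_{\textup{FCN}}^{(k-1)}$ must be read as occupying exactly the exposed column block of width $d_{y_{k-1}}$; the reappearance of previously seen classes through $\bm{\bar Y}_{k}^{\textup{train}}$ is precisely what forces the ECLG term into that same block rather than into the fresh unexposed columns. Care is therefore needed to confirm that the column partition of \eqref{eq_w_update} matches $d_{y_{k}} = d_{y_{k-1}} + |\mathcal{S}_{\textup{unexposed},k}|$ at every task boundary, so that the recursion is well-typed. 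This is the point where the GCIL setting genuinely departs from standard ACL, and where the structural argument must be made explicit rather than treated as routine.
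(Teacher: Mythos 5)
Your proposal is correct, and its overall skeleton matches the paper's proof in Appendix~\ref{app:proof_of_the_theorem}: both verify the recursion by direct algebra on the closed-form solution $\bm{\hat W}_{\textup{FCN}}^{(k)}=\bm{R}_k\bm{X}_{1:k}^{\textup{total}\T}\bm{Y}_{1:k}^{\textup{total}}$, both obtain \eqref{eq_R_update2} by a single Woodbury inversion with inner term $\bm{I}$, and both must handle the zero-padding of historical labels into the enlarged column space (the paper formalizes this with the padded matrices $\bm{Q}_{k-1}^{\prime}$ and $\bm{\hat W}_{\textup{FCN}}^{(k-1)\prime}$, which is exactly the ``well-typing'' bookkeeping you flag at the end). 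Where you genuinely depart is the treatment of the historical term. The paper substitutes the Woodbury expansion of $\bm{R}_k$ into $\bm{R}_k\bm{Q}_{k-1}^{\prime}$ and then needs the auxiliary operator identity $\bm{R}_{k-1}\bm{X}_{k}^{\textup{(B)}\T}\bigl(\bm{I}+\bm{X}_{k}^{\textup{(B)}}\bm{R}_{k-1}\bm{X}_{k}^{\textup{(B)}\T}\bigr)^{-1}=\bm{R}_{k}\bm{X}_{k}^{\textup{(B)}\T}$, proved by a separate manipulation of $\bm{K}_k=(\bm{I}+\bm{X}_{k}^{\textup{(B)}}\bm{R}_{k-1}\bm{X}_{k}^{\textup{(B)}\T})^{-1}$. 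You instead reduce the exposed block to the single matrix identity $\bm{R}_{k}=(\bm{I}-\bm{R}_{k}\bm{X}_{k}^{\textup{(B)}\T}\bm{X}_{k}^{\textup{(B)}})\bm{R}_{k-1}$ and dispatch it in one line by left-multiplying with $\bm{R}_k^{-1}$ and invoking $\bm{R}_k^{-1}=\bm{R}_{k-1}^{-1}+\bm{X}_{k}^{\textup{(B)}\T}\bm{X}_{k}^{\textup{(B)}}$; this bypasses the $\bm{K}_k$ manipulation entirely and makes the additive structure of the regularized Gram matrix do all the work. Your route is shorter and more transparent; its only (minor) cost is that it argues at the level of $\bm{R}_k^{-1}$, which requires invertibility ($\gamma>0$, or a nonsingular Gram matrix) as a proof device, whereas the paper's identities stay expressed in the quantities the algorithm actually stores --- a presentational rather than mathematical distinction, since the paper's own definitions of $\bm{R}_k$ and $\bm{R}_{k-1}^{-1}$ presuppose the same invertibility.
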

    \begin{proof}
        See Appendix \ref{app:proof_of_the_theorem}.
    \end{proof}
    As indicated in Theorem \ref{thm:gef_1}, the weight $\bm{\hat W}_{\textup{FCN}}^{(k)}$ in task $k$ recursively obtained using the previous-task weight $\bm{\hat W}_{\textup{FCN}}^{(k-1)}$ is identical to its joint-learning counterpart formulated in \eqref{eq_loss_k}. That is, the GACL maintains the same \textit{weight-invariant property} in the GCIL scenario as other ACL methods.

    \par The pseudo-code of the GACL is listed in Algorithm \ref{algo:update}.

\begin{algorithm}[!ht]
    \caption{The pseudo-code of GACL.}\label{algo:update}
    \begin{algorithmic}
        \STATE {\bfseries Input:} GCIL tasks $\textit{\textbf{$\dataset{D}$}}_1^{\text{train}}, \dots, \mathcal{\textit{\textbf{$\dataset{D}$}}}_K^{\text{train}}$ with $\textit{\textbf{$\dataset{D}$}}_k^{\text{train}} \sim \{\bm{X}_{k}^{\text{train}}, \bm{Y}_{k}^{\text{train}}\}$, the pre-trained backbone with frozen weight $\bm{\Theta}_{\text{backbone}}$
        \STATE {\bfseries Initialization:} $\bm{R}_0 \gets \gamma\bm{I}$, $\bm W_{\text{FCN}}^{(0)} \gets \bm{0}$
        \FOR{task $k=1$ {\bfseries to} $K$}
            \STATE $\textit{\textbf{X}}_{k}^{{\text{(E)}}} \gets  f_{\text{backbone}}(\bm{X}_{k}^{\text{train}}, \bm{\Theta}_{\text{backbone}})$  \hspace{0.3cm} (\ref{eq: vitbackbone})
            \STATE $\textit{\textbf{X}}_{k}^{{\text{(B)}}} \gets f_{\text{buffer}}(\bm{X}^{(\text{E})}_{k})$ \hspace{0.3cm} (\ref{eq:buffer})
            \STATE Decompose $\bm{Y}_{k}^{\text{train}}$ into exposed and unexposed class components $\bar{\bm{Y}}_{k}^{\text{train}}$ and $\tilde{\bm{Y}}_{k}^{\text{train}}$
            \STATE $\textit{\textbf{R}}_{k} \gets \bm{R}_{k-1} - \bm{R}_{k-1}\bm{X}_{k}^{\textup{(B)}\T}(\bm{I} + \bm{X}_{k}^{\textup{(B)}}\bm{R}_{k-1}\bm{X}_{k}^{\textup{(B)}\T})^{-1}\bm{X}_{k}\bm{R}_{k-1}$  \hspace{0.3cm} (\ref{eq_R_update2})
            \STATE $\bm {W}_{\textup{unexposed}}^{(k)} \gets \begin{bmatrix}\bm{W}_{\text{FCN}}^{(k-1)}- \bm{R}_{k}\bm{X}_{k}^{\text{(B)} \T}\bm{X}_{k}^{\text{(B)}}\bm{\hat W}_{\text{FCN}}^{(k-1)}
                    &  \bm{R}_{k}\bm X_{k}^{\text{(B)}\T} \bm {\tilde{Y}}_{k}^{\text{train}}\end{bmatrix}$  \hspace{0.3cm} (\ref{eq_w_unexposed}) \\
            \STATE $ \bm{W}_{\textup{ECLG}}^{(k)} \gets \begin{bmatrix}
            \bm{R}_{k}\bm X_{k}^{\text{(B)}\T}\bm {\bar Y}_{k}^{\text{train}} & \bm{0}\end{bmatrix}$ \hspace{0.3cm} (\ref{eq_w_eclg})
            \STATE $\bm{W}_{\text{FCN}}^{(k)} \gets \bm {W}_{\textup{unexposed}}^{(k)}  + \bm{W}_{\textup{ECLG}}^{(k)}$
        \ENDFOR
    \end{algorithmic}
\end{algorithm}

\textbf{Exemplar-free.} The recursive formulation is aided by $\bm{R}_{k}$ as indicated in \eqref{eq_R_update2}. Note that this autocorrelation memory matrix records the inverse of inner products among the historical embedding matrices as shown in \eqref{eq_R_k-1}. Hence, the embeddings (e.g., $\bm{X}_{k}^{\textup{(B)}}$) are not reversible. Saving $\bm{R}_{k}$ instead of used samples is a safe alternative to preserve past knowledge. That is, our GACL is an \textit{exemplar-free} technique without the need to keep any historical samples.

    To more properly explain our GACL, as indicated in Figure \ref{fig:mainflow} (c), the recursive solution in \eqref{eq_w_update} can be rewritten as the sum of the unexposed-class contributed weight $\bm{\hat W}_{\textup{unexposed}}^{(k)}$ and the ECLG weight $\bm{\hat W}_{\textup{ECLG}}^{(k)}$, i.e.,
    \begin{align}
    	\bm{\hat W}_{\textup{FCN}}^{(k)} = \bm{\hat W}_{\textup{unexposed}}^{(k)} + \bm{\hat W}_{\textup{ECLG}}^{(k)},
    \end{align}
    where
    \begin{align}\label{eq_w_unexposed}
    	 &\bm{\hat W}_{\textup{unexposed}}^{(k)}=\begin{bmatrix}\bm{\hat W}_{\text{FCN}}^{(k-1)}- \bm{R}_{k}\bm{X}_{k}^{\text{(B)} \T}\bm{X}_{k}^{\text{(B)}}\bm{\hat W}_{\text{FCN}}^{(k-1)}
    			&  \bm{R}_{k}\bm X_{k}^{\text{(B)}\T} \bm {\tilde{Y}}_{k}^{\text{train}}\end{bmatrix},\\\label{eq_w_eclg}
       &\bm{\hat W}_{\textup{ECLG}}^{(k)} = \begin{bmatrix}
            \bm{R}_{k}\bm X_{k}^{\text{(B)}\T}\bm {\bar Y}_{k}^{\text{train}} & \bm{0}\end{bmatrix}.
    \end{align}

    \paragraph{Unexposed-class Contributed Weight.} The unexposed-class contributed weight $\bm{\hat W}_{\textup{unexposed}}^{(k)}$ is recursively updated by the data of the unexposed class only. Note that the unexposed class label $\bm {\tilde{Y}}_{k}^{\text{train}}$ is applied on the concatenated weight along with new data $\bm X_{k}^{\text{(B)}\T}$, which is reasonable as historical information should not intervene with the weight update of unseen classes. On the other hand, new data $\bm X_{k}^{\text{(B)}\T}$ could also affect historical knowledge. This is marked by the gain of $-\bm{R}_{k}\bm{X}_{k}^{\text{(B)} \T}\bm{X}_{k}^{\text{(B)}}\bm{\hat W}_{\text{FCN}}^{(k-1)}$ to the original weight $\bm{\hat W}_{\text{FCN}}^{(k-1)}$ as indicated in \eqref{eq_w_unexposed}.
    
    \paragraph{Exposed-class Label Gain Weight.} The ECLG module indicated in \eqref{eq_w_eclg} captures knowledge from exposed-class labels. The supervision of this weight component marked by $\bm{R}_{k}\bm X_{k}^{\text{(B)}\T}\bm {\bar Y}_{k}^{\text{train}}$ is mainly contributed by the exposed-class labels (i.e., $\bm {\bar Y}_{k}^{\text{train}}$). It is important to note that when $\bm {\bar Y}_{k}^{\text{train}}$ is empty (i.e., no classes reappear in task $k$), this component does not contribute to the update of $\bm{\hat W}_{\text{FCN}}^{(k)}$. This module is also isolated to distinguish GACL's difference from the existing ACL methods in a mathematical analysis manner (indicated as follows).
    
    \paragraph{Difference from Existing ACL Methods.} Overall, the GACL can be treated as a nontrivial generalization of ACIL \cite{ACIL2022NeurIPS}, GKEAL \cite{GKEAL2023CVPR}, and various other ACL methods. For instance, in conventional CIL where no classes reappear in new tasks (i.e., $\forall k, \bm {\bar Y}_{k}^{\text{train}}\in\mathbb{R}^{*\times 0}$), the classifier of the GACL $\bm{\hat W}_{\textup{FCN}}^{(k)} = \bm{\hat W}_{\textup{unexposed}}^{(k)}$, which is equivalent to the recursive classifier of the ACIL. That is, the ACIL is a special case of our proposed GACL. The major difference lies in the ECLG module, corresponding to the exposed-class gain. This pattern makes sense as there must be compensation on top of ACIL updates (specifically designed for traditional CIL) when exposed data (out of setting) participate.

\newpage
\begin{table}[!ht]
\centering
\caption{Comparison of $\mathcal A_{\text{AUC}}$, $ \mathcal A_{\text{Avg}}$, and $\mathcal A_{\text{Last}}$ among the GACL and other methods under the Si-Blurry setting. Data in \textbf{bold} represent the best EFCIL results, and data \underline{underlined} are the best among all settings. We run all experiments 5 times and show ``$\text{mean}_{\pm \text{standard error}}$''.}\label{tab:acc}
\renewcommand{\arraystretch}{1.4}
\centering
\resizebox{1\linewidth}{!}{
\SetTblrInner{rowsep=3pt}
\begin{tblr}{rows ={c}, columns={colsep=1.5pt}, column{1} = {leftsep=0pt, rightsep =1.5pt}, column{2} = {leftsep=1.5pt, rightsep =0pt, l} , column{3} = {leftsep=0pt, rightsep =1pt}, row{16} = {light-gray}}
 \toprule
Mem & \SetCell[r=2]{c}{Method} &\SetCell[r=2]{c}{EFCIL}  & \SetCell[c=3]{c}{CIFAR-100 ($\%$)} 
&  & & \SetCell[c=3]{c}{ImageNet-R ($\%$)} & &
& \SetCell[c=3]{c}{Tiny-ImageNet ($\%$)}  \\ 
\cmidrule[lr]{4-6} \cmidrule[lr]{7-9} \cmidrule[lr]{10-12} 
Size&  & &  $\mathcal A_{\text{AUC}}$  & $\mathcal A_{\text{Avg}}$  & $\mathcal A_{\text{Last}}$     &  $\mathcal A_{\text{AUC}}$  & $\mathcal A_{\text{Avg}}$  & $\mathcal A_{\text{Last}}$   & $\mathcal A_{\text{AUC}}$  & $\mathcal A_{\text{Avg}}$  & $\mathcal A_{\text{Last}}$  \\ \hline 
    \SetCell[r=4]{c}{2000} &
EWC++ \cite{EWC_2017_PNAS} &\ding{53}& \valuepm{53.31}{1.70}  & \valuepm{50.95}{1.50}   & \valuepm{52.55}{0.71}  & \valuepm{36.31}{0.72}  & \valuepm{39.87}{1.35}    &\valuepm{29.52}{0.43}  & \valuepm{52.43}{0.52} & \valuepm{54.61}{1.54}    & \valuepm{37.67}{0.77}  \\
& ER \cite{ER_NEURIPS2019_}&\ding{53}& \valuepm{56.17}{1.84} & \valuepm{53.80}{1.46} & \valuepm{55.60}{0.69}  & \valuepm{39.31}{0.70}     & \valuepm{43.03}{1.19}    & \valuepm{32.09}{0.44}&\valuepm{55.69}{0.47}  &\valuepm{57.87}{1.42}    &\valuepm{41.10}{0.57} \\
& RM \cite{RM2021CVPR} &\ding{53}&\valuepm{53.22}{1.82}         &\valuepm{52.99}{1.69}         &\valuepm{55.25}{0.61}   & \valuepm{32.34}{1.88} &\valuepm{36.46}{2.23} &\valuepm{25.26}{1.08}         &   \valuepm{49.28}{0.43}      &  \valuepm{57.74}{1.57}        &  \valuepm{41.79}{0.34}     \\
& MVP-R \cite{Siblurry2023ICCV} &\ding{53}& \underline{\valuepm{60.62}{1.03}} & \underline{\valuepm{57.58}{0.56}} & \valuepm{64.30}{0.29} & \underline{\valuepm{47.16}{1.00}}   & \underline{\valuepm{50.36}{0.90}}   & \valuepm{42.05}{0.15} & \valuepm{61.15}{0.86} & \valuepm{62.41}{0.50}    & \valuepm{51.12}{0.67} \\ \hline
\SetCell[r=4]{c}{500}
&EWC++ \cite{EWC_2017_PNAS}   &\ding{53}& \valuepm{48.31}{1.81}         & \valuepm{44.56}{0.96}  & \valuepm{40.52}{0.83}          & \valuepm{32.81}{0.76}    & \valuepm{35.54}{1.69}    & \valuepm{23.43}{0.61} & \valuepm{45.30}{0.61} &\valuepm{46.34}{2.05}     &\valuepm{27.05}{1.35}     \\ 
& ER \cite{ER_NEURIPS2019_} &\ding{53}& \valuepm{51.59}{1.94}   &\valuepm{48.03}{0.80}  & \valuepm{44.09}{0.80}  & \valuepm{35.96}{0.72}    & \valuepm{39.01}{1.54}   & \valuepm{26.14}{0.44}    & \valuepm{48.95}{0.58}        & \valuepm{50.44}{1.71}    & \valuepm{29.97}{0.75}     \\ 
& RM \cite{RM2021CVPR} &\ding{53}& \valuepm{41.07}{1.30}        & \valuepm{38.10}{0.59}          & \valuepm{32.66}{0.34}  & \valuepm{22.45}{0.62}   & \valuepm{22.08}{1.78}   & \valuepm{9.61}{0.13}&\valuepm{36.66}{0.40}          &\valuepm{38.83}{2.33}    &\valuepm{18.23}{0.22}   \\
& MVP-R \cite{Siblurry2023ICCV}  &\ding{53}& 
\valuepm{56.20}{1.47} &\valuepm{53.61}{0.04} &\valuepm{55.35}{0.43} & \valuepm{43.28}{1.41}   & \valuepm{45.74}{0.97}   & \valuepm{35.60}{1.18} & \valuepm{55.28}{1.42} 
& \valuepm{55.45}{1.02} & \valuepm{40.12}{0.40}\\
\hline
 \SetCell[r=6]{c}{0}
 & LwF \cite{LwF2018TPAMI} &{\color{red}$\checkmark$}& \valuepm{40.71}{2.13}         & \valuepm{38.49}{0.56}         & \valuepm{27.03}{2.92}  & \valuepm{29.41}{0.83}    & \valuepm{31.95}{1.86}   & \valuepm{19.67}{1.27} & \valuepm{39.88}{0.90} & \valuepm{41.35}{2.59}		& \valuepm{24.93}{2.01}  \\
& L2P \cite{L2P_2022_CVPR}  &{\color{red}$\checkmark$}& \valuepm{42.68}{2.70}   & \valuepm{39.89}{0.45} & \valuepm{28.59}{3.34}  & \valuepm{30.21}{0.91}    & \valuepm{32.21}{1.73}   & \valuepm{18.01}{3.07}    & \valuepm{41.67}{1.17} & \valuepm{42.53}{2.52}   &\valuepm{24.78}{2.31}   \\
& DualPrompt \cite{dualprompt_2022} &{\color{red}$\checkmark$}  & \valuepm{41.34}{2.59}         & \valuepm{38.59}{0.68}         & \valuepm{22.74}{3.40}     & \valuepm{30.44}{0.88}    & \valuepm{32.54}{1.84}    & \valuepm{16.07}{3.20}  & \valuepm{39.16}{1.13}  & \valuepm{39.81}{3.03}   & \valuepm{20.42}{3.37}  \\
& MVP \cite{Siblurry2023ICCV}&{\color{red}$\checkmark$} &\valuepm{45.07}{2.43} & \valuepm{44.93}{0.54} & \valuepm{39.94}{0.47} & \valuepm{35.77}{2.55}& \valuepm{35.58}{1.20} & \valuepm{22.06}{5.01}& \valuepm{46.43}{3.07} & \valuepm{45.41}{1.09} & \valuepm{28.21}{2.89}
\\
& SLDA \cite{SLDA_2020_CVPR_Workshops} &{\color{red}$\checkmark$} &\valuepm{53.00}{3.85} & \valuepm{50.09}{2.77} & \valuepm{61.79}{3.81} & \valuepm{33.11}{3.17}
& \valuepm{33.78}{1.76} & \valuepm{39.02}{1.30}
& \valuepm{49.17}{4.41} & \valuepm{47.93}{4.43} & \valuepm{53.13}{2.29} 
\\ 
 & \textbf{GACL} (ours) &{\color{red}$\checkmark$} & \textbf{\valuepm{57.99}{2.46}} &\textbf{\valuepm{56.24}{3.12}} & \underline{\textbf{\valuepm{70.31}{0.06}}}
&\textbf{\valuepm{41.68}{0.78}} &\textbf{\valuepm{47.30}{0.84}} &\underline{\textbf{\valuepm{42.22}{0.10}}}
&\underline{\textbf{\valuepm{63.14}{0.66}}}
&\underline{\textbf{\valuepm{69.32}{0.87}}}
&\underline{\textbf{\valuepm{62.68}{0.08}}}  \\ 
\bottomrule
\end{tblr}
}
\end{table}

 \section{Experiments}   
\subsection{Experimental Setup}
    In the section, we conduct experiments on various benchmark datasets and compare the GACL with both EFCIL and replay-based state-of-the-art methods, including LwF \cite{LwF2018TPAMI}, L2P \cite{L2P_2022_CVPR}, DualPrompt \cite{dualprompt_2022}, ER \cite{ER_NEURIPS2019_}, EWC++ \cite{EWC_2017_PNAS}, SLDA \cite{SLDA_2020_CVPR_Workshops}, RM \cite{RM2021CVPR}, MVP \cite{Siblurry2023ICCV}, and MVP-R (MVP with exemplars).\footnote{The results for MVP and MVP-R are based on their official implementation, committed on October 26, 2024 (commit ID: ad8d1426a497545ba634521c00008c34ceece799).}

    \paragraph{Datasets.} We conduct experiments on three datasets: CIFAR-100 \cite{CIFAR2009}, ImageNet-R \cite{imagenetr2021}, and Tiny-ImageNet \cite{TinyImagenet2015}. We evaluate each method under the Si-Blurry setting \cite{Siblurry2023ICCV} (the most complex GCIL setting) with 5 independent seeds. For the Si-Blurry setting, we set the disjoint class ratio $\bm{r}_{\text{D}}$ to $50\%$ and the blurry sample ratio $\bm{r}_{\text{B}}$ to $10\%$. More details about Si-Blurry are listed in Appendix \ref{app:siblurry}.

    \paragraph{Implementation Details.} We utilize the DeiT-S/16 \cite{deitvit} as our backbone. Following \cite{vitpretrain-v202-kim23x, multihead-v199-kim22a}, we pre-train the backbone on 611 ImageNet classes after excluding 389 classes that overlap with CIFAR and Tiny-ImageNet to prevent data leakage. To ensure a fair comparison, all methods utilize a frozen backbone. All methods under comparison are implemented as specified in \cite{Siblurry2023ICCV}. The memory sizes of compared relay-based methods are set to 500 and 2000.
    
    There are two hyperparameters in the GACL, the regularization term $\gamma$ and the size of the buffer layer. Here, we adopt $\gamma = 100$, which is determined by the grid search of \{0, 10, 100, 500, 1000, 10000\} on CIFAR-100 (by a 90\%-10\% train-val split). As the regularization term $\gamma$ is not sensitive in a proper range \cite{ACIL2022NeurIPS}, we adopt this value for all datasets for convenience. We relocate its analysis to Appendix \ref{app:gamma}. The size for the buffer layer $\bm{W}_{\textup{B}}$ is set to 5000 for both the GACL and ACIL for convenience.  

    \paragraph{Evaluation Protocol.} Three metrics are adopted to evaluate GCIL tasks. The real-time performance is evaluated by the \textit{area under the curve of accuracy} $\mathcal A_{\text{AUC}}$ \cite{CLIB_2022_ICLR}, i.e., $\mathcal A_{\text{AUC}}=\sum_{i=1}^{k} f(i\cdot \bigtriangleup n) \cdot \bigtriangleup n$, where $\bigtriangleup n$ is the number of samples observed between evaluation and $f(\cdot)$ is the curve in the accuracy-to-\{number of training samples\} plot, measuring anytime inference performance during training. A higher $\mathcal A_{\text{AUC}}$ corresponds to a method consistently maintaining high accuracy throughout the training. The overall performance is evaluated by the \textit{average incremental accuracy} (or average accuracy) $\mathcal A_{\text{Avg}} = \frac{1}{K+1}{\sum}_{k=1}^{K}\mathcal {A}_{k}$, where the task-wise accuracy $\mathcal {A}_{k}$ indicates the average test accuracy in task $k$ by testing the network on $\dataset{D}_{1:k}^{\text{test}}$. A higher $\mathcal A_{\text{Avg}}$ score is preferred when evaluating algorithms. The last evaluation metric is the \textit{last-task accuracy} $\mathcal A_{\text{Last}}$ evaluating the network's last-task performance after completing all tasks.
    
    \subsection{Comparison with State-of-the-arts}
    As shown in Figure \ref{fig:compare}, we comprehensively compare the GACL with both EFCIL and replay-based methods.
    
    \begin{figure}[!h]
        \centering
        \includegraphics[width=\linewidth]{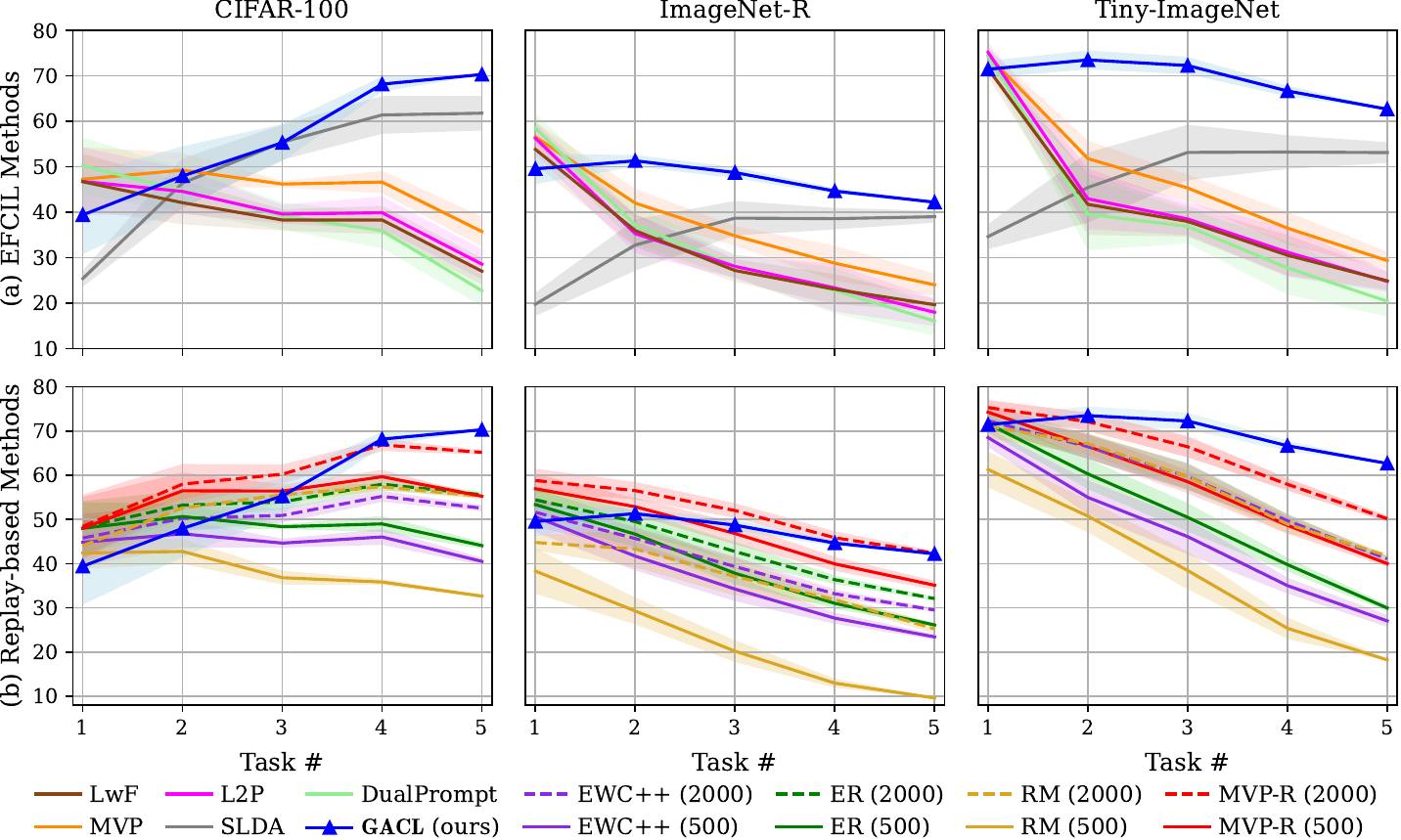}
        \vskip -0.1in
        \caption{The task-wise accuracy $\mathcal {A}_k $ of the GACL with EFCIL methods (top) and replay-based methods (bottom) on benchmark datasets with the $K = 5$.}
        \vskip -0.1in
        \label{fig:compare}
    \end{figure}

    \paragraph{Compare with EFCIL Methods.} EFCIL methods address privacy concerns and mitigate catastrophic forgetting without exemplars. Among EFCIL methods, our GACL consistently exhibits superior performance across all three datasets, as illustrated in the lower panel of Table \ref{tab:acc}.

    For instance, on CIFAR-100, our method surpasses the second-best method SLDA, by \textbf{4.99}$\%$, \textbf{6.15}$\%$, and \textbf{8.52}$\%$ for $\mathcal A_{\text{AUC}}$, $\mathcal A_{\text{Avg}}$, and $\mathcal A_{\text{Last}}$, respectively. On Tiny-ImageNet, the GACL achieves impressive results with $\mathcal A_{\text{AUC}}$, $\mathcal A_{\text{Avg}}$, and $\mathcal A_{\text{Last}}$ reaching 63.14$\%$, 69.32$\%$, and 62.68$\%$, respectively, surpassing the previous best EFCIL by \textbf{13.97}$\%$, \textbf{21.39}$\%$, and \textbf{9.55}$\%$. Similar patterns are evident in the results of ImageNet-R, further confirming that the GACL is an exceptional tool for GCIL.
    
    Owing to the weight-invariant property, the GACL exhibits more accurate and stable evolutions as $k$ increases as observed in Figure \ref{fig:compare} (a). All compared EFCIL methods exhibit sharp declines in accuracy, while the GACL delivers nearly non-declining curves. In particular, on CIFAR-100, the GACL shows an unnatural improvement of task-wise accuracy throughout the learning tasks, with the GACL initially lagging behind other EFCIL methods. This is because the Si-Blurry samples more than 70\% of the CIFAR-100 categories in the first two tasks (see Appendix \ref{app:analysis_on_cifar}), constructing a scenario where gradient-based algorithms could largely avoid the forgetting issue. Moreover, our method produces more stable predictions across diverse scenarios, as indicated by much smaller standard errors (colored shades in Figure \ref{fig:compare} (a)). In summary, the experimental results demonstrate that our proposed GACL is exceedingly accurate and robust, exhibiting exceptional generalization ability.


    \paragraph{Compare with Replay-based Methods.}
    Replay-based methods are considerably competitive as they leverage historical samples. The memory size is a key adjustment, as increasing it typically leads to performance improvements by allowing more historical knowledge to be reviewed. For instance, the MVP-R achieves 4.42\%, 3.97\%, and 8.95\% gains for $\mathcal A_{\text{AUC}}$, $\mathcal A_{\text{Avg}}$, and $\mathcal A_{\text{Last}}$ (see Table \ref{tab:acc}) on CIFAR-100 when increasing the memory size from 500 to 2000.
    
    As an exemplar-free technique, our GACL avoids re-using the historical samples. However, as indicated in Table \ref{tab:acc}, the GACL still outperforms most existing replay-based results. For instance, the GACL achieves the best $\mathcal A_{\text{Last}}$ results among all settings. The GACL's $\mathcal A_{\text{AUC}}$ and $\mathcal A_{\text{Avg}}$ results are also mostly superior, except that our performance is slightly weaker than that of MVP-R with a memory size of 2000 on CIFAR-100 and ImageNet-R. Although increasing the number of exemplars can further improve the results of replay-based methods, this approach could lead to higher training and memory costs and, more importantly, more severe privacy invasion.
    
As indicated in Figure \ref{fig:compare} (b), replay-based methods experience accuracy declines similar to those observed in the EFCIL case. This decline is due to an inherent limitation of gradient-based iterative algorithms, which tend to favor recently trained categories and thus lead to catastrophic forgetting. The GACL is iterative-free and then not constrained by this forgetting issue, thereby achieving nearly no performance reduction as $K$ increases. 

    \paragraph{Why the GACL Gives Leading Performance.}
    The above comparisons show that the proposed GACL is a powerful GCIL technique. Its competitive performance can be explained as follows. (i) Weight-invariant property. As shown in Theorem \ref{thm:gef_1}, the weight obtained recursively is equal to its joint-learning counterpart, indicating that the GACL is a ``completely non-forgetting'' technique (under the condition of a frozen backbone). (ii) Analytical solution. Existing GCIL techniques are gradient-based iterative algorithms prone to catastrophic forgetting by nature. The GACL is a new member of the ACL and inherits its non-iterative gradient-free essence with an analytical solution, thereby avoiding the task-recency bias to address forgetting.
    
\subsection{Ablation Study on the ECLG Module}
    The ECLG module is a core component that allows the GACL to obtain the weight-invariant property in the GCIL scenario. Here, we conduct an ablation study to justify the ECLG's contributions under various blurry sample ratios $r_B$ with $\bm{r}_{\text{D}}= 50\%$. Larger $\bm{r}_{\text{B}}$ indicates more complex data distributions in the Si-Blurry setting. As shown in Table \ref{tab:ECLG}, the GACL without ECLG exhibits poor performance with a visible gap for $\mathcal A_{\text{AUC}}$, $\mathcal A_{\text{Avg}}$, and $\mathcal A_{\text{Last}}$. For instance, on CIFAR-100 with $r_B$ = 10\%, the ECLG contributes a 23.01\% $\mathcal A_{\text{Last}}$ gain to the GACL.
    \begin{table}[H]
    \centering
    \small
    \caption{Ablation study on the ECLG module of our GACL.}\label{tab:ECLG}
    \begin{tblr}{width=\textwidth, colspec={X[0.3, c, m] X[1.5, c, m]X[c, m]X[c, m]X[c, m]X[c, m]X[c, m]X[c, m]}}
    \toprule
    \SetCell[r=2]{c}   $\bm{r}_{\text{B}}$ & \SetCell[r=2]{c} Dataset & \SetCell[c=3]{c}{With ECLG} & & & \SetCell[c=3]{c} Without ECLG & & \\ \cmidrule[r]{2-5} \cmidrule[l]{6-8}
       & & $\mathcal A_{\text{AUC}}($\%$)$  & $\mathcal A_{\text{Avg}}($\%$)$  & $\mathcal A_{\text{Last}}($\%$)$ & $\mathcal A_{\text{AUC}}($\%$)$  & $\mathcal A_{\text{Avg}}($\%$)$  & $\mathcal A_{\text{Last}}($\%$)$\\ \hline
 \SetCell[r=3]{c} 10\% & CIFAR-100  & \textbf{\valuepm{57.99}{2.46}} &\textbf{\valuepm{56.24}{3.12}} & \textbf{\valuepm{70.31}{0.06}}
     &\valuepm{45.68}{7.74} & \valuepm{42.04}{4.52} & \valuepm{47.30}{2.61} \\
  &  ImageNet-R  & \textbf{\valuepm{41.68}{0.78}} &\textbf{\valuepm{47.30}{0.84}} &\textbf{\valuepm{42.22}{0.10}}
    & \valuepm{40.29}{2.23} & \valuepm{46.95}{1.15}&\valuepm{41.67}{0.36}\\ 
   & Tiny-ImageNet & \textbf{\valuepm{63.14}{0.66}}&\textbf{\valuepm{69.32}{0.87}}&\textbf{\valuepm{62.68}{0.08}} & \valuepm{60.21}{1.86}& \valuepm{65.80}{1.20} & \valuepm{60.13}{0.37}\\ \hline
    \SetCell[r=3]{c} 30\% 
    & CIFAR-100  & \textbf{\valuepm{57.33}{1.03}} & \textbf{\valuepm{58.74}{1.59}} &\textbf{\valuepm{69.90}{0.01}}     
     &\valuepm{42.53}{1.97} &\valuepm{42.26}{1.75} &\valuepm{45.49}{1.17}     \\ 
  &  ImageNet-R  & \textbf{\valuepm{42.19}{0.44}} & \textbf{\valuepm{47.82}{1.11}} & \textbf{\valuepm{42.90}{0.08}}
    & \valuepm{42.01}{0.26} & \valuepm{46.95}{1.15}&\valuepm{41.67}{0.56}\\ 
   & Tiny-ImageNet & \textbf{\valuepm{60.73}{1.15}} & \textbf{\valuepm{67.31}{1.14}} & \textbf{\valuepm{59.73}{2.55}} & \valuepm{60.63}{1.86}& \valuepm{57.03}{1.98} & \valuepm{60.13}{0.55}\\\hline
    \SetCell[r=3]{c} 50\% &CIFAR-100 &\textbf{\valuepm{56.74}{1.14}} &\textbf{\valuepm{58.29}{1.95}} & \textbf{\valuepm{70.02}{0.05}}
     &\valuepm{40.91}{3.57}&\valuepm{47.25}{2.64}       &\valuepm{58.61}{2.62} \\ 
  &  ImageNet-R  & \textbf{\valuepm{41.33}{1.46}} & \textbf{\valuepm{46.42}{2.30}} & \textbf{\valuepm{42.92}{0.17}}
    & \valuepm{40.44}{3.14} & \valuepm{42.50}{3.43}&\valuepm{39.05}{1.65}\\ 
   & Tiny-ImageNet & \textbf{\valuepm{60.96}{1.83}} & \textbf{\valuepm{66.28}{2.69}} & \textbf{\valuepm{62.24}{0.10}} & \valuepm{60.32}{4.20}& \valuepm{60.70}{4.30} & \valuepm{56.97}{1.89}\\\bottomrule
        \end{tblr}
    \end{table}

    As claimed in Theorem \ref{thm:gef_1}, the classifier without the ECLG module fails to absorb knowledge from joint classes in each task (i.e., classes that reappear), leading to substantial information loss under the GCIL setting. The GACL, equipped with the ECLG module, demonstrates competence in handling overlapping classes in realistic scenarios.

\subsection{Robustness Analysis in Si-Blurry Setting}
    Here, we conduct a robust analysis by varying the disjoint class ratio $\bm{r}_{\text{D}}$ and the blurry sample ratio $\bm{r}_{\text{B}}$. The comparison happens among the GACL, the second-best EFCIL method SLDA, and the top-performing replay-based method MVP-R with a memory size of 500.

    We evaluate our method under various $\bm{r}_{\text{D}}$, including extreme cases where each task shares classes ($\bm{r}_{\text{D}} = 0\%$) and traditional CIL scenarios ($\bm{r}_{\text{D}}= 100\%$). Table \ref{tab:disjointratio} illustrates that our GACL consistently outperforms the compared methods (e.g., leads the SLDA by 2\%-10\%) and produces near-identical $A_{\text{Last}}$ values with varying $\bm{r}_{\text{D}}$. This shows the accurate and robust traits of the GACL.
    
  We also evaluate our method using various $\bm{r}_{\text{B}}$ values, as shown in Table \ref{tab:blurryratio}. Similar patterns observed here align with those in Table \ref{tab:disjointratio}, further demonstrating the robustness of the proposed GACL, which delivers exceptional performance across different GCIL settings.
    
\begin{paracol}{2}
\begin{table}[H]
    \centering
    \caption{The performance at different $\bm{r}_{\text{D}}$ with $\bm{r}_{\text{B}}= 10\%$ on CIFAR-100.}\label{tab:disjointratio}
    \small
\resizebox{1\linewidth}{!}{
\SetTblrInner{rowsep=2pt}
\begin{tblr}{rows ={c}, columns={colsep=2pt}, rows = {ht =12pt}, column{1} ={wd=20pt, colsep=1pt},  column{2} = {l}}
  
\toprule
$\bm{r}_{\text{D}}$         &  \SetCell[c=1]{c} Method          & $\mathcal A_{\text{AUC}}($\%$)$  & $\mathcal A_{\text{Avg}}($\%$)$  & $\mathcal A_{\text{Last}}($\%$)$ \\ \hline
\SetCell[r=3]{c}  0\%  
&   SLDA \cite{SLDA_2020_CVPR_Workshops}  & \underline{\textbf{\valuepm{55.51}{1.93}}}
&\underline{\textbf{\valuepm{53.94}{0.92}}}  
&\valuepm{67.45}{0.26}  \\
& MVP-R \cite{Siblurry2023ICCV} & \valuepm{53.49}{1.40}
&\valuepm{50.73}{0.37}       &\valuepm{60.54}{2.03}       \\
 & \textbf{GACL} (ours)   & \valuepm{49.96}{0.61}     &\valuepm{50.56}{0.49}       & \underline{\textbf{\valuepm{69.94}{0.09}}}    \\ \hline
\SetCell[r=3]{c} 50\%  
&    SLDA \cite{SLDA_2020_CVPR_Workshops}   &  \valuepm{53.00}{3.85} & \valuepm{50.09}{2.77} & \valuepm{61.79}{3.81} \\
  & MVP-R \cite{Siblurry2023ICCV}& 
 \valuepm{56.20}{1.47} &\valuepm{53.61}{0.04} 
 &\valuepm{55.35}{0.43} \\
 & \textbf{GACL} (ours)   &     \underline{\textbf{\valuepm{57.99}{2.46}}} &\underline{\textbf{\valuepm{56.24}{3.12}}} & \underline{\textbf{\valuepm{70.31}{0.06}}}\\ \hline
\SetCell[r=3]{c} 100\% 
& SLDA \cite{SLDA_2020_CVPR_Workshops}  &\valuepm{65.46}{4.79}&\valuepm{67.29}{5.28}&\valuepm{63.56}{2.68}    \\
   & MVP-R \cite{Siblurry2023ICCV} 
   & \valuepm{68.43}{0.28}&\valuepm{68.04}{1.48}       &\valuepm{53.14}{0.72}       \\
& \textbf{GACL} (ours) & \underline{\textbf{\valuepm{70.72}{0.32}}} & \underline{\textbf{\valuepm{77.57}{1.02}}}  & \underline{\textbf{\valuepm{69.97}{0.03}}}   \\ \bottomrule
    \end{tblr}}
\end{table}
\switchcolumn
    \begin{table}[H]
    \centering
    \caption{The performance at different $\bm{r}_{\text{B}}$ with $\bm{r}_{\text{D}}= 50\%$ on CIFAR-100.}\label{tab:blurryratio}
\small
    \resizebox{1\linewidth}{!}{
\SetTblrInner{rowsep=2pt}
\begin{tblr}{rows ={c}, columns={colsep=2pt},rows = {ht =12pt},  column{1} ={wd=20pt, colsep=1pt}, 
column{2} ={l}}
    \toprule
    $\bm{r}_{\text{B}}$          &  \SetCell[c=1]{c} Method & $\mathcal A_{\text{AUC}}($\%$)$  & $\mathcal A_{\text{Avg}}($\%$)$  & $\mathcal A_{\text{Last}}($\%$)$ \\ \hline
    \SetCell[r=3]{c}  10\%  
     & SLDA \cite{SLDA_2020_CVPR_Workshops}  & \valuepm{53.00}{3.85} & \valuepm{50.09}{2.77} &\valuepm{61.79}{3.81} \\
& MVP-R \cite{Siblurry2023ICCV}
& \valuepm{56.20}{1.47} &\valuepm{53.61}{0.04} &\valuepm{55.35}{0.43}    \\
 & \textbf{GACL} (ours)  &     \underline{\textbf{\valuepm{57.99}{2.46}}} &\underline{\textbf{\valuepm{56.24}{3.12}}} & \underline{\textbf{\valuepm{70.31}{0.06}}}\\ \hline
    \SetCell[r=3]{c} 30\%  
     &   SLDA \cite{SLDA_2020_CVPR_Workshops}   & \valuepm{54.55}{4.66} &\valuepm{54.06}{2.41}  &\valuepm{63.04}{2.56} \\
        & MVP-R \cite{Siblurry2023ICCV}& \underline{\valuepm{59.65}{2.04}}& \valuepm{58.31}{1.52}      &\valuepm{58.16}{1.38}       \\
    & \textbf{GACL} (ours) &  \textbf{\valuepm{57.33}{1.03}} & \underline{\textbf{\valuepm{58.74}{1.59}}} &\underline{\textbf{\valuepm{69.90}{0.01}}}                  \\ \hline
 \SetCell[r=3]{c} 50\% 
&    SLDA \cite{SLDA_2020_CVPR_Workshops}  &\valuepm{53.81}{3.43}&\valuepm{52.93}{2.36}&\valuepm{63.45}{2.72}\\
    & MVP-R \cite{Siblurry2023ICCV} & \underline{\valuepm{59.10}{1.98}} &\valuepm{57.34}{1.96}       &\valuepm{54.81}{0.21}       \\
    & \textbf{GACL} (ours) & \textbf{\valuepm{56.74}{1.14}}&\underline{\textbf{\valuepm{58.29}{1.95}}} & \underline{\textbf{\valuepm{70.02}{0.05}}}\\ \bottomrule \end{tblr}}
    \end{table}
\end{paracol}

\subsection{Limitation and Future Work}\label{Sec:limit}
    Overall, the GACL exhibits various good characteristics as an exemplar-free GCIL technique. The major limitation here is the need for a well-trained backbone because the GACL does not update backbone weights. This could motivate the exploration of adjustable backbones to continuously improve their feature extraction abilities, thereby further enhancing GACL's performance.

\section{Conclusion}
    In this paper, we introduce the exemplar-free generalized analytic class incremental learning (GACL) approach to address the GCIL problem. Building upon analytic learning, the GACL delivers closed-form solutions to GCIL through the decomposition of GCIL data into exposed and unexposed classes. The GACL achieves the weight-invariant property that provides identical solutions for GCIL to its joint learning counterpart. We theoretically validate this property and provide high interpretability through the matrix analysis tool. Various experiments are conducted under the Si-Blurry setting, demonstrating that our proposed GACL achieves remarkable performance with high robustness compared to state-of-the-art EFCIL and replay-based methods.

\begin{ack}
This research was supported by the National Natural Science Foundation of China (62306117), the Guangzhou Basic and Applied Basic Research Foundation (2024A04J3681, 2023A04J1687), the South China University of Technology-TCL Technology Innovation Fund, the Fundamental Research Funds for the Central Universities (2023ZYGXZR023, 2024ZYGXZR074), the Guangdong Basic and Applied Basic Research Foundation (2024A1515010220), the CAAI-MindSpore Open Fund developed on Openl Community, the Shenzhen Fundamental Research Program (JCYJ20230807091809020), and Shenzhen Science and Technology Plan (Grant No. JCYJ20210324123802006).\end{ack}

\bibliographystyle{unsrt}

\begin{thebibliography}{10}
	
	\bibitem{iCaRL_2017_CVPR}
	Sylvestre-Alvise Rebuffi, Alexander Kolesnikov, Georg Sperl, and Christoph~H. Lampert.
	\newblock {iCaRL}: Incremental classifier and representation learning.
	\newblock In {\em Proceedings of the IEEE Conference on Computer Vision and Pattern Recognition (CVPR)}, July 2017.
	
	\bibitem{CF_1989_PLM}
	Michael McCloskey and Neal~J. Cohen.
	\newblock Catastrophic interference in connectionist networks: The sequential learning problem.
	\newblock In Gordon~H. Bower, editor, {\em Psychology of Learning and Motivation}, volume~24, pages 109--165. Academic Press, 1989.
	
	\bibitem{CF_2013_arXiV}
	Ian~J. Goodfellow, Mehdi Mirza, Da~Xiao, Aaron Courville, and Yoshua Bengio.
	\newblock An empirical investigation of catastrophic forgetting in gradient-based neural networks, 2013.
	
	\bibitem{LUCIR_2019_CVPR}
	Saihui Hou, Xinyu Pan, Chen~Change Loy, Zilei Wang, and Dahua Lin.
	\newblock Learning a unified classifier incrementally via rebalancing.
	\newblock In {\em Proceedings of the IEEE/CVF Conference on Computer Vision and Pattern Recognition (CVPR)}, June 2019.
	
	\bibitem{BlurryM_2019_NeurIPS}
	Rahaf Aljundi, Min Lin, Baptiste Goujaud, and Yoshua Bengio.
	\newblock Gradient based sample selection for online continual learning.
	\newblock In H.~Wallach, H.~Larochelle, A.~Beygelzimer, F.~d\textquotesingle Alch\'{e}-Buc, E.~Fox, and R.~Garnett, editors, {\em Advances in Neural Information Processing Systems}, volume~32. Curran Associates, Inc., 2019.
	
	\bibitem{GCIL_2020_CVPR_Workshops}
	Fei Mi, Lingjing Kong, Tao Lin, Kaicheng Yu, and Boi Faltings.
	\newblock Generalized class incremental learning.
	\newblock In {\em Proceedings of the IEEE/CVF Conference on Computer Vision and Pattern Recognition (CVPR) Workshops}, June 2020.
	
	\bibitem{ACIL2022NeurIPS}
	Huiping Zhuang, Zhenyu Weng, Hongxin Wei, RENCHUNZI XIE, Kar-Ann Toh, and Zhiping Lin.
	\newblock {ACIL}: Analytic class-incremental learning with absolute memorization and privacy protection.
	\newblock In S.~Koyejo, S.~Mohamed, A.~Agarwal, D.~Belgrave, K.~Cho, and A.~Oh, editors, {\em Advances in Neural Information Processing Systems}, volume~35, pages 11602--11614. Curran Associates, Inc., 2022.
	
	\bibitem{GKEAL2023CVPR}
	Huiping Zhuang, Zhenyu Weng, Run He, Zhiping Lin, and Ziqian Zeng.
	\newblock {GKEAL}: Gaussian kernel embedded analytic learning for few-shot class incremental task.
	\newblock In {\em Proceedings of the IEEE/CVF Conference on Computer Vision and Pattern Recognition (CVPR)}, pages 7746--7755, June 2023.
	
	\bibitem{PODNet_2020_ECCV}
	Arthur Douillard, Matthieu Cord, Charles Ollion, Thomas Robert, and Eduardo Valle.
	\newblock {PODNet}: Pooled outputs distillation for small-tasks incremental learning.
	\newblock In Andrea Vedaldi, Horst Bischof, Thomas Brox, and Jan-Michael Frahm, editors, {\em Computer Vision -- ECCV 2020}, pages 86--102, Cham, 2020. Springer International Publishing.
	
	\bibitem{AANets_2021_CVPR}
	Yaoyao Liu, Bernt Schiele, and Qianru Sun.
	\newblock Adaptive aggregation networks for class-incremental learning.
	\newblock In {\em Proceedings of the IEEE/CVF Conference on Computer Vision and Pattern Recognition (CVPR)}, pages 2544--2553, June 2021.
	
	\bibitem{FOSTER_2022_ECCV}
	Fu-Yun Wang, Da-Wei Zhou, Han-Jia Ye, and De-Chuan Zhan.
	\newblock {FOSTER}: Feature boosting and compression for class-incremental learning.
	\newblock In Shai Avidan, Gabriel Brostow, Moustapha Ciss{\'e}, Giovanni~Maria Farinella, and Tal Hassner, editors, {\em Computer Vision -- ECCV 2022}, pages 398--414, Cham, 2022. Springer Nature Switzerland.
	
	\bibitem{OHO_2023_AAAI}
	Yaoyao Liu, Yingying Li, Bernt Schiele, and Qianru Sun.
	\newblock Online hyperparameter optimization for class-incremental learning.
	\newblock {\em Proceedings of the AAAI Conference on Artificial Intelligence}, 37(7):8906--8913, Jun. 2023.
	
	\bibitem{LessForgetting_2016_arXiv}
	Heechul Jung, Jeongwoo Ju, Minju Jung, and Junmo Kim.
	\newblock Less-forgetting learning in deep neural networks, 2016.
	
	\bibitem{LwF2018TPAMI}
	Zhizhong Li and Derek Hoiem.
	\newblock Learning without forgetting.
	\newblock {\em IEEE Transactions on Pattern Analysis and Machine Intelligence}, 40(12):2935--2947, 2018.
	
	\bibitem{KD_Hinton_arXiv2015}
	Geoffrey Hinton, Oriol Vinyals, and Jeff Dean.
	\newblock Distilling the knowledge in a neural network, 2015.
	
	\bibitem{EWC_2017_PNAS}
	James Kirkpatrick, Razvan Pascanu, Neil Rabinowitz, Joel Veness, Guillaume Desjardins, Andrei~A. Rusu, Kieran Milan, John Quan, Tiago Ramalho, Agnieszka Grabska-Barwinska, Demis Hassabis, Claudia Clopath, Dharshan Kumaran, and Raia Hadsell.
	\newblock Overcoming catastrophic forgetting in neural networks.
	\newblock {\em Proceedings of the National Academy of Sciences}, 114(13):3521--3526, 2017.
	
	\bibitem{RWalk_2018_ECCV}
	Arslan Chaudhry, Puneet~K. Dokania, Thalaiyasingam Ajanthan, and Philip H.~S. Torr.
	\newblock Riemannian walk for incremental learning: Understanding forgetting and intransigence.
	\newblock In {\em Proceedings of the European Conference on Computer Vision (ECCV)}, September 2018.
	
	\bibitem{RN_2018_ICPR}
	Xialei Liu, Marc Masana, Luis Herranz, Joost Van~de Weijer, Antonio~M. López, and Andrew~D. Bagdanov.
	\newblock Rotate your networks: Better weight consolidation and less catastrophic forgetting.
	\newblock In {\em 2018 24th International Conference on Pattern Recognition (ICPR)}, pages 2262--2268, 2018.
	
	\bibitem{PASS_2021_CVPR}
	Fei Zhu, Xu-Yao Zhang, Chuang Wang, Fei Yin, and Cheng-Lin Liu.
	\newblock Prototype augmentation and self-supervision for incremental learning.
	\newblock In {\em Proceedings of the IEEE/CVF Conference on Computer Vision and Pattern Recognition (CVPR)}, pages 5871--5880, June 2021.
	
	\bibitem{SSRE_2022_CVPR}
	Kai Zhu, Wei Zhai, Yang Cao, Jiebo Luo, and Zheng-Jun Zha.
	\newblock Self-sustaining representation expansion for non-exemplar class-incremental learning.
	\newblock In {\em Proceedings of the IEEE/CVF Conference on Computer Vision and Pattern Recognition (CVPR)}, pages 9296--9305, June 2022.
	
	\bibitem{FeTrIL_2023_WACV}
	Gr\'egoire Petit, Adrian Popescu, Hugo Schindler, David Picard, and Bertrand Delezoide.
	\newblock {FeTrIL}: Feature translation for exemplar-free class-incremental learning.
	\newblock In {\em Proceedings of the IEEE/CVF Winter Conference on Applications of Computer Vision (WACV)}, pages 3911--3920, January 2023.
	
	\bibitem{PIL_2004_Neurocomputing}
	Ping Guo and Michael~R. Lyu.
	\newblock A pseudoinverse learning algorithm for feedforward neural networks with stacked generalization applications to software reliability growth data.
	\newblock {\em Neurocomputing}, 56:101--121, 2004.
	
	\bibitem{BRMP_2022}
	Huiping Zhuang, Zhiping Lin, and Kar-Ann Toh.
	\newblock Blockwise recursive moore–penrose inverse for network learning.
	\newblock {\em IEEE Transactions on Systems, Man, and Cybernetics: Systems}, 52(5):3237--3250, 2022.
	
	\bibitem{ACnnL_2022_arXiv}
	Huiping Zhuang, Zhiping Lin, Yimin Yang, and Kar-Ann Toh.
	\newblock Analytic learning of convolutional neural network for pattern recognition, 2022.
	
	\bibitem{RanPAC_McDonnell_NeurIPS2023}
	Mark~D. McDonnell, Dong Gong, Amin Parvaneh, Ehsan Abbasnejad, and Anton van~den Hengel.
	\newblock {RanPAC}: Random projections and pre-trained models for continual learning.
	\newblock In A.~Oh, T.~Naumann, A.~Globerson, K.~Saenko, M.~Hardt, and S.~Levine, editors, {\em Advances in Neural Information Processing Systems}, volume~36, pages 12022--12053. Curran Associates, Inc., 2023.
	
	\bibitem{Zhuang_DSAL_AAAI2024}
	Huiping Zhuang, Run He, Kai Tong, Ziqian Zeng, Cen Chen, and Zhiping Lin.
	\newblock {DS-AL}: A dual-stream analytic learning for exemplar-free class-incremental learning.
	\newblock {\em Proceedings of the AAAI Conference on Artificial Intelligence}, 38(15):17237--17244, Mar. 2024.
	
	\bibitem{REAL_He_arXiv2024}
	Run He, Huiping Zhuang, Di~Fang, Yizhu Chen, Kai Tong, and Cen Chen.
	\newblock {REAL}: Representation enhanced analytic learning for exemplar-free class-incremental learning, 2024.
	
	\bibitem{AFL_zhuang2024}
	Huiping Zhuang, Run He, Kai Tong, Di~Fang, Han Sun, Haoran Li, Tianyi Chen, and Ziqian Zeng.
	\newblock Analytic federated learning, 2024.
	
	\bibitem{locality_2024_iclr}
	Zichen Liu, Chao Du, Wee~Sun Lee, and Min Lin.
	\newblock Locality sensitive sparse encoding for learning world models online.
	\newblock In {\em The Twelfth International Conference on Learning Representations}, 2024.
	
	\bibitem{Siblurry2023ICCV}
	Jun-Yeong Moon, Keon-Hee Park, Jung~Uk Kim, and Gyeong-Moon Park.
	\newblock Online class incremental learning on stochastic blurry task boundary via mask and visual prompt tuning.
	\newblock In {\em Proceedings of the IEEE/CVF International Conference on Computer Vision (ICCV)}, pages 11731--11741, October 2023.
	
	\bibitem{CLIB_2022_ICLR}
	Hyunseo Koh, Dahyun Kim, Jung-Woo Ha, and Jonghyun Choi.
	\newblock Online continual learning on class incremental blurry task configuration with anytime inference.
	\newblock In {\em International Conference on Learning Representations}, 2022.
	
	\bibitem{RM2021CVPR}
	Jihwan Bang, Heesu Kim, YoungJoon Yoo, Jung-Woo Ha, and Jonghyun Choi.
	\newblock Rainbow memory: Continual learning with a memory of diverse samples.
	\newblock In {\em Proceedings of the IEEE/CVF Conference on Computer Vision and Pattern Recognition (CVPR)}, pages 8218--8227, June 2021.
	
	\bibitem{dualprompt_2022}
	Zifeng Wang, Zizhao Zhang, Sayna Ebrahimi, Ruoxi Sun, Han Zhang, Chen-Yu Lee, Xiaoqi Ren, Guolong Su, Vincent Perot, Jennifer Dy, and Tomas Pfister.
	\newblock {DualPrompt}: Complementary prompting for rehearsal-free continual learning.
	\newblock In Shai Avidan, Gabriel Brostow, Moustapha Ciss{\'e}, Giovanni~Maria Farinella, and Tal Hassner, editors, {\em Computer Vision -- ECCV 2022}, pages 631--648, Cham, 2022. Springer Nature Switzerland.
	
	\bibitem{ViT_2021_ICLR}
	Alexey Dosovitskiy, Lucas Beyer, Alexander Kolesnikov, Dirk Weissenborn, Xiaohua Zhai, Thomas Unterthiner, Mostafa Dehghani, Matthias Minderer, Georg Heigold, Sylvain Gelly, Jakob Uszkoreit, and Neil Houlsby.
	\newblock An image is worth 16x16 words: Transformers for image recognition at scale.
	\newblock In {\em International Conference on Learning Representations}, 2021.
	
	\bibitem{ER_NEURIPS2019_}
	David Rolnick, Arun Ahuja, Jonathan Schwarz, Timothy Lillicrap, and Gregory Wayne.
	\newblock Experience replay for continual learning.
	\newblock In H.~Wallach, H.~Larochelle, A.~Beygelzimer, F.~d\textquotesingle Alch\'{e}-Buc, E.~Fox, and R.~Garnett, editors, {\em Advances in Neural Information Processing Systems}, volume~32. Curran Associates, Inc., 2019.
	
	\bibitem{L2P_2022_CVPR}
	Zifeng Wang, Zizhao Zhang, Chen-Yu Lee, Han Zhang, Ruoxi Sun, Xiaoqi Ren, Guolong Su, Vincent Perot, Jennifer Dy, and Tomas Pfister.
	\newblock Learning to prompt for continual learning.
	\newblock In {\em Proceedings of the IEEE/CVF Conference on Computer Vision and Pattern Recognition (CVPR)}, pages 139--149, June 2022.
	
	\bibitem{SLDA_2020_CVPR_Workshops}
	Tyler~L. Hayes and Christopher Kanan.
	\newblock Lifelong machine learning with deep streaming linear discriminant analysis.
	\newblock In {\em Proceedings of the IEEE/CVF Conference on Computer Vision and Pattern Recognition (CVPR) Workshops}, June 2020.
	
	\bibitem{CIFAR2009}
	Alex Krizhevsky, Vinod Nair, and Geoffrey Hinton.
	\newblock Learning multiple layers of features from tiny images.
	\newblock Technical report, Toronto, ON, Canada, 2009.
	
	\bibitem{imagenetr2021}
	Dan Hendrycks, Steven Basart, Norman Mu, Saurav Kadavath, Frank Wang, Evan Dorundo, Rahul Desai, Tyler Zhu, Samyak Parajuli, Mike Guo, Dawn Song, Jacob Steinhardt, and Justin Gilmer.
	\newblock The many faces of robustness: A critical analysis of out-of-distribution generalization.
	\newblock In {\em Proceedings of the IEEE/CVF International Conference on Computer Vision (ICCV)}, pages 8340--8349, October 2021.
	
	\bibitem{TinyImagenet2015}
	Ya~Le and Xuan Yang.
	\newblock Tiny imagenet visual recognition challenge.
	\newblock {\em CS 231N}, 7(7):3, 2015.
	
	\bibitem{deitvit}
	Hugo Touvron, Matthieu Cord, Matthijs Douze, Francisco Massa, Alexandre Sablayrolles, and Herve Jegou.
	\newblock Training data-efficient image transformers \& distillation through attention.
	\newblock In Marina Meila and Tong Zhang, editors, {\em Proceedings of the 38th International Conference on Machine Learning}, volume 139 of {\em Proceedings of Machine Learning Research}, pages 10347--10357. PMLR, 18--24 Jul 2021.
	
	\bibitem{vitpretrain-v202-kim23x}
	Gyuhak Kim, Changnan Xiao, Tatsuya Konishi, and Bing Liu.
	\newblock Learnability and algorithm for continual learning.
	\newblock In Andreas Krause, Emma Brunskill, Kyunghyun Cho, Barbara Engelhardt, Sivan Sabato, and Jonathan Scarlett, editors, {\em Proceedings of the 40th International Conference on Machine Learning}, volume 202 of {\em Proceedings of Machine Learning Research}, pages 16877--16896. PMLR, 23--29 Jul 2023.
	
	\bibitem{multihead-v199-kim22a}
	Gyuhak Kim, Bing Liu, and Zixuan Ke.
	\newblock A multi-head model for continual learning via out-of-distribution replay.
	\newblock In Sarath Chandar, Razvan Pascanu, and Doina Precup, editors, {\em Proceedings of The 1st Conference on Lifelong Learning Agents}, volume 199 of {\em Proceedings of Machine Learning Research}, pages 548--563. PMLR, 22--24 Aug 2022.
	
	\bibitem{CPNet_2021_NPL}
	Huiping Zhuang, Zhiping Lin, and Kar-Ann Toh.
	\newblock Correlation projection for analytic learning of a classification network.
	\newblock {\em Neural Processing Letters}, 53(6):3893--3914, dec 2021.
	
\end{thebibliography}

\newpage

\appendix

\onecolumn
\section{Proof of Theorem \ref{thm:gef_1}}
\label{app:proof_of_the_theorem}
\begin{proof}

in task $k-1$, we have
\begin{align}
\label{eq_w_k_1}
\bm{\hat W}_{\text{FCN}}^{(k-1)} = (\bm{X}_{1:k-2}^{\text{total}\T} \bm{X}_{1:k-2}^{\text{total}}  +\bm{X}_{k-1}^{\text{(B)}\T}\bm{X}_{1:k-1}^{\text{(B)}}+\gamma \bm{I})^{-1} \begin{bmatrix} \bm X_{1:k-2}^{\text{total}\T} \bm Y_{1:k-2}^{\text{total}}+\bm X_{k-1}^{\text{(B)}\T}\bm {\bar Y}_{k-1}^{\text{train}} & \bm X_{k-1}^{\text{(B)}\T} \bm {\tilde{Y}}_{k}^{\text{train}}
\end{bmatrix}.
\end{align}

Hence, in task $k$, we have
\begin{align}
\label{eq_w_k}
\bm{\hat W}_{\text{FCN}}^{(k)} = (\bm{X}_{1:k-1}^{\text{total}\T} \bm{X}_{1:k-1}^{\text{total}}  +\bm{X}_{k}^{\text{(B)}\T}\bm{X}_{k}^{\text{(B)}}+\gamma \bm{I})^{-1} \begin{bmatrix} \bm X_{1:k-1}^{\text{total}\T} \bm Y_{1:k-1}^{\text{total}}+\bm X_{k}^{\text{(B)}\T}\bm {\bar Y}_{k}^{\text{train}} & \bm X_{k}^{\text{(B)}\T} \bm {\tilde{Y}}_{k}^{\text{train}} 
\end{bmatrix}.
\end{align}

We have defined the autocorrelation memory matrix $\bm{R}_{k-1}$ in the paper via
\begin{align}\label{eq_r_m_k_1}
    \bm{R}_{k-1} = (\bm{X}_{1:k-2}^{\text{total}\T} \bm{X}_{1:k-2}^{\text{total}}   + \bm{X}_{k-1}^{\text{(B)}\T}\bm{X}_{k-1}^{\text{(B)}}+\gamma \bm{I})^{-1}.
\end{align}

To facilitate subsequent calculations, here we also define a cross-correlation matrix $\bm{Q}_{k-1}$, i.e., 
\begin{align}\label{eq_Q_k_1}
    \bm{Q}_{k-1} = \begin{bmatrix} \bm X_{1:k-2}^{\text{total}\T} \bm Y_{1:k-2}^{\text{total}}+\bm X_{k-1}^{\text{(B)}\T}\bm {\bar Y}_{k-1}^{\text{train}} & \bm X_{k-1}^{\text{(B)}\T}  \bm {\tilde{Y}}_{k}^{\text{train}}
\end{bmatrix}.
\end{align}

Thus we can rewrite \eqref{eq_w_k_1} as
\begin{align}\label{xx}
    \hspace{4pt}\bm{\hat W}_{\text{FCN}}^{(k-1)} = \bm{R}_{k-1}\bm{Q}_{k-1}.
\end{align}

Therefore, in task ${k}$ we have
\begin{align}\label{compact}
    \bm{\hat W}_{\text{FCN}}^{(k)} = \bm{R}_{k}\bm{Q}_{k}.
\end{align}

From \eqref{eq_r_m_k_1}, we can recursively calculate $\bm{R}_{k}$ from $\bm{R}_{k-1}$, i.e.,
\begin{align}\label{eq_r_m_k}
    \bm{R}_{k} = 
    \left(\bm{R}_{k-1}^{-1} + \bm{X}_{k}^{\text{(B)}\T}\bm{X}_{k}^{\text{(B)}}\right)^{-1}.        
\end{align}
    
According to the Woodbury matrix identity, we have
\begin{align*}
    (\bm{A} + \bm{U}\bm{C}\bm{V})^{-1} = \bm{A}^{-1} - \bm{A}^{-1}\bm{U}(\bm{C}^{-1} + \bm{V}\bm{A}^{-1}\bm{U})^{-1}\bm{V}\bm{A}^{-1}.
\end{align*}
  
Let $\bm{A} = \bm{R}_{k-1}^{-1}$, $\bm{U} = \bm{X}_{k}^{\text{(B)}\T}$, $\bm{C} = \bm{I}$, and $\bm{V} = \bm{X}_{k}^{\text{(B)}}$ in \eqref{eq_r_m_k}, we have
\begin{align}\label{eq_R_update1}
    \bm{R}_{k} = \bm{R}_{k-1} - \bm{R}_{k-1}\bm{X}_{k}^{\text{(B)}\T}(\bm{I} + \bm{X}_{k}^{\text{(B)}}\bm{R}_{k-1}\bm{X}_{k}^{\text{(B)}\T})^{-1}\bm{X}_{k}^{\text{(B)}}\bm{R}_{k-1}.
\end{align}
  
Hence, $\bm{R}_{k}$ can be recursively updated using its last-task counterpart $\bm{R}_{k-1}$ and data from the current task (i.e., $\bm{X}_{k}^{\text{(B)}}$). This proves the recursive calculation of the autocorrelation memory matrix.

Next, we derive the recursive formulation of $\bm{\hat W}_{\text{FCN}}^{(k)}$. To this end, we also recurse the cross-correlation matrix $\bm{Q}_{k}$ in task $k$, i.e.,
\begin{align}\label{eq_R_update3}
\bm{Q}_{k} = \begin{bmatrix} \bm X_{1:k-1}^{\text{total}\T} \bm Y_{1:k-1}^{\text{total}}+\bm X_{k}^{\text{(B)}\T}\bm {\bar Y}_{k}^{\text{train}} & \bm X_{k}^{\text{(B)}\T} \bm {\tilde{Y}}_{k}^{\text{train}} \end{bmatrix} = \bm{Q}_{k-1}^{\prime}+ \begin{bmatrix}
    \bm{X}_{k}^{\text{(B)}\T}\bm {\bar Y}_{k}^{\text{train}} & \bm {X}_{k}^{\text{(B)}\T} \bm{\tilde{Y}}_{k}^{\text{train}}
\end{bmatrix},
\end{align}
where
\begin{align}\label{Qk-1}
\bm{Q}_{k-1}^{\prime} & = \begin{cases}
\begin{bmatrix}
    \bm{Q}_{k-1} & \bm{0}_{d_{\text{(B)}}\times (d_{y_{k}}-d_{y_{k-1}})}\end{bmatrix}, &d_{y_{k}} > d_{y_{k-1}} \\
    \bm{Q}_{k-1}, &d_{y_{k}} = d_{y_{k-1}} 
\end{cases} .
\end{align}

Note that the concatenation in \eqref{Qk-1} is due to the assumption that $\bm{Y}_{1:k}^{\text{train}}$ in task $k$ contains more data classes (hence more columns) than $\bm{Y}_{1:k-1}^{\text{train}}$. It is possible that there are no new classes appear in task $k$, then $ \bm {\tilde{Y}}_{k}^{\text{train}}$ should be $\bm{0}$.

Similar to what \eqref{Qk-1} does,
\begin{align}
\bm{\hat W}_{\text{FCN}}^{(k-1)\prime} & = \begin{cases}
\begin{bmatrix}\bm{\hat W}_{\text{FCN}}^{(k-1)} & \bm{0}_{d_{\text{(B)}}\times (d_{y_{k}}-d_{y_{k-1}})}\end{bmatrix}, & d_{y_{k}} > d_{y_{k-1}} \\
 \bm{\hat W}_{\text{FCN}}^{(k-1)}, & d_{y_{k}} = d_{y_{k-1}} 
\end{cases}
\end{align}

We have
\begin{align}\label{wfcn}
    \bm{\hat W}_{\text{FCN}}^{(k-1)\prime} = \bm{R}_{k-1}\bm{Q}_{k-1}^{\prime}.
\end{align}
Hence, $\bm{\hat W}_{\text{FCN}}^{(k)}$ can be rewritten as
\begin{align}\nonumber
    \bm{\hat W}_{\text{FCN}}^{(k)} &= \bm{R}_{k}\bm{Q}_{k} \\ \nonumber
    &=    \bm{R}_{k}(\bm{Q}_{k-1}^{\prime} + \begin{bmatrix}
    \bm{X}_{k}^{\text{(B)}\T}\bm {\bar Y}_{k}^{\text{train}} & \bm X_{k}^{\text{(B)}\T} \bm {\tilde{Y}}_{k}^{\text{train}} 
\end{bmatrix})\\ \label{eq_W_k_33}
    &=\bm{R}_{k}\bm{Q}_{k-1}^{\prime} + \bm{R}_{k}\bm{X}_{k}^{\text{(B)}\T}\begin{bmatrix}
    \bm {\bar Y}_{k}^{\text{train}} & \bm {\tilde{Y}}_{k}^{\text{train}} 
\end{bmatrix}.
\end{align}

By substituting \eqref{eq_R_update1} into $\bm{R}_{k}\bm{Q}_{k-1}^{\prime}$, we have
\begin{align}\nonumber
\bm{R}_{k}\bm{Q}_{k-1}^{\prime} &=  \bm{R}_{k-1}\bm{Q}_{k-1}^{\prime} - \bm{R}_{k-1}\bm{X}_{k}^{\text{(B)}\T}(\bm{I} + \bm{X}_{k}^{\text{(B)}}\bm{R}_{k-1}\bm{X}_{k}^{\text{(B)}\T})^{-1}\bm{X}_{k}^{\text{(B)}}\bm{R}_{k-1}\bm{Q}_{k-1}^{\prime}\\ \label{eq_W_k_2}
&=\bm{\hat W}_{\text{FCN}}^{(k-1)\prime}- \bm{R}_{k-1}\bm{X}_{k}^{\text{(B)}\T}(\bm{I} + \bm{X}_{k}^{\text{(B)}}\bm{R}_{k-1}\bm{X}_{k}^{\text{(B)}\T})^{-1}\bm{X}_{k}^{\text{(B)}}\bm{\hat W}_{\text{FCN}}^{(k-1)\prime}.
\end{align}

To simplify this equation, let $\bm{K}_{k} = (\bm{I} + \bm{X}_{k}^{\text{(B)}}\bm{R}_{k-1}\bm{X}_{k}^{\text{(B)}\T})^{-1}$. Since 
\begin{align*}
    \bm{I} = \bm{K}_{k}\bm{K}_{k}^{-1} = \bm{K}_{k}(\bm{I} + \bm{X}_{k}^{\text{(B)}}\bm{R}_{k-1}\bm{X}_{k}^{\text{(B)}\T}),
\end{align*}
we have
$\bm{K}_{k} = \bm{I} - \bm{K}_{k} \bm{X}_{k}^{\text{(B)}}\bm{R}_{k-1}\bm{X}_{k}^{\text{(B)}\T}$.
Therefore, 
\begin{align}\label{a}
    &\bm{R}_{k-1}\bm{X}_{k}^{\text{(B)}\T}(\bm{I} + \bm{X}_{k}^{\text{(B)}}\bm{R}_{k-1}\bm{X}_{k}^{\text{(B)}\T})^{-1} \notag \\
    &= \bm{R}_{k-1}\bm{X}_{k}^{\text{(B)}\T}\bm{K}_{k}\notag \\
    &=\bm{R}_{k-1}\bm{X}_{k}^{\text{(B)}\T}(\bm{I} - \bm{K}_{k} \bm{X}_{k}^{\text{(B)}}\bm{R}_{k-1}\bm{X}_{k}^{\text{(B)}\T})\notag \\
    &=(\bm{R}_{k-1} - \bm{R}_{k-1}\bm{X}_{k}^{\text{(B)}\T}\bm{K}_{k} \bm{X}_{k}^{\text{(B)}}\bm{R}_{k-1})\bm{X}_{k}^{\text{(B)}\T} \notag \\
    &= \bm{R}_{k}\bm{X}_{k}^{\text{(B)}\T}.
\end{align}

Substituting \eqref{a} into \eqref{eq_W_k_2}, $\bm{R}_{k}\bm{Q}_{k-1}^{\prime}$ can be written as
\begin{align}\label{eq_RQprime}
    \bm{R}_{k}\bm{Q}_{k-1}^{\prime} = \bm{\hat W}_{\text{FCN}}^{(k-1)\prime}- \bm{R}_{k}\bm{X}_{k}^{\text{(B)}\T}\bm{X}_{k}^{\text{(B)}}\bm{\hat W}_{\text{FCN}}^{(k-1)\prime}.
\end{align}

Substituting \eqref{eq_RQprime} into \eqref{eq_W_k_33} implies that
\begin{align}\nonumber
    \bm{\hat W}_{\text{FCN}}^{(k)} &= 
    \bm{\hat W}_{\text{FCN}}^{(k-1)\prime}- \bm{R}_{k}\bm{X}_{k}^{\text{(B)}\T}\bm{X}_{k}^{\text{(B)}}\bm{\hat W}_{\text{FCN}}^{(k-1)\prime}+\bm{R}_{k}\bm{X}_{k}^{\text{(B)}\T}\begin{bmatrix}
    \bm {\bar Y}_{k}^{\text{train}} & \bm {\tilde{Y}}_{k}^{\text{train}} \end{bmatrix}
    \\
   &  = \begin{bmatrix}
  \bm{\hat W}_{\text{FCN}}^{(k-1)}- \bm{R}_{k}\bm{X}_{k}^{\text{(B)}\T}\bm{X}_{k}^{\text{(B)}}\bm{\hat W}_{\text{FCN}}^{(k-1)}+\bm{R}_{k}\bm X_{k}^{\text{(B)}\T}\bm {\bar Y}_{k}^{\text{train}} &
\bm{R}_{k}\bm X_{k}^{\text{(B)}\T} \bm {\tilde{Y}}_{k}^{\text{train}} \end{bmatrix}.
\end{align}
which completes the proof.

\end{proof}\newpage

\section{GCIL Properties}
\label{app:GCIL}

The GCIL scenario \cite{GCIL_2020_CVPR_Workshops} is a recent CIL focus. Given task-wise learning tasks, we can involve all class labels in a set $\mathcal{S}$ with the number of classes $N$. The sample sizes, such as the numbers of input images of different classes appearing in task $k$, are modeled as a random vector $\bm{c}_k \in\mathbb{R}^{N}$. Each entry $\bm{c}_{k,i}$ is a random variable denoting the sample size of class $i$ in task $k$. In the generalized form, $\bm{c}_k$ is sampled from a task-dependent distribution.
The GCIL scenario can be summarized as the following three key properties.
\begin{property}\label{prop:num_classes}
    The number of classes in a task is not fixed. Suppose $m_k$ is the number of classes in task $k$, we have:
    \begin{align}
    M_k & = \left | \left \{i \in \mathcal{S}:\bm{c}_{k,i} >0 \right \}  \right | \sim \mathcal{M}_k,
    \end{align}
    where $\mathcal{M}_k$ is a task-dependent distribution.
\end{property}

\begin{property}\label{prop:overlap}
    Classes appearing in different tasks could overlap. For two tasks $k$ and $k^{\prime}$, $ k\neq k^{\prime}$, we have:
    \begin{align}
    P (\bm{c}_k \odot \bm{c}_{k^{\prime}} \neq 0) >0,
    \end{align}
    where $\odot$ denotes element-wise multiplication of two vectors and $P(\cdot)$ is the probability.
\end{property}

\begin{property}\label{prop:sample_size}
    Sample sizes of different classes at the same task could be different. That is, for task $k$, we have
    \begin{align}
        i,j \in \mathcal{S}, i \neq j, P (\bm{c}_{k, i} \neq \bm{c}_{k,j}\mid  \bm{c}_{k, i}\neq 0, 
 \bm{c}_{k,j}\neq 0) >0.
    \end{align}
\end{property}
In short, the number of classes and samples could vary throughout the continual learning.

\section{Si-Blurry Setting}
\label{app:siblurry}
\begin{figure}[!h]
    \centering
    \includegraphics[width=0.6\linewidth]{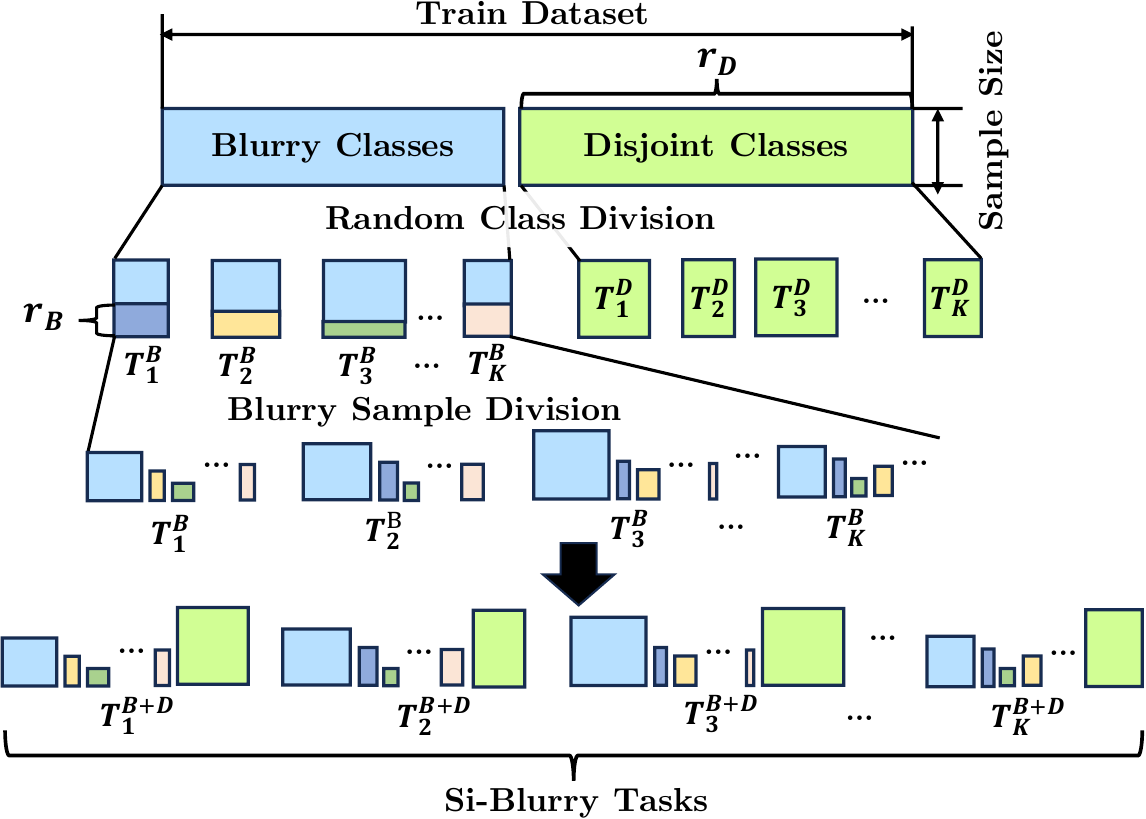}
    \caption{A configuration example of Si-Blurry setting.}
    \label{fig:siblurry}
\end{figure}
The Si-Blurry setting \cite{Siblurry2023ICCV} satisfies all the three properties of GCIL mentioned in Appendix \ref{app:GCIL} and can be treated as its good realization. As shown in Figure \ref{fig:siblurry}, for a $K$-task learning, the Si-Blurry first randomly partitions all classes into two groups: disjoint classes that cannot overlap between tasks and blurry classes that might reappear. The ratio of partition is controlled by the \textit{disjoint class ratio} $\bm{r}_{\text{D}}$, which is defined as the ratio of the number of disjoint classes to the number of all classes. Then disjoint classes and blurry classes are randomly assigned to disjoint tasks ($T^\text{D} $) and blurry tasks ($T^\text{B}$) respectively. Next, each blurry task further conducts the blurry sample division by randomly extracting part of samples to assign to other blurry tasks based on \textit{blurry sample ratio} $\bm{r}_{\text{B}}$, which is defined as the ratio of the extracted sample within samples in all blurry tasks. Finally, each Si-Blurry task $T^\text{B+D}$ with a stochastic blurry task boundary consists of a disjoint and blurry task. We adopt Si-Blurry with different combinations of $\bm{r}_{\text{D}}$ and $\bm{r}_{\text{B}}$ for reliable empirical validations. 

\section{Compute Resources}\label{app:compute}

\textbf{GPU Usage}. We conduct experiments in PyTorch on one Nvidia Geforce RTX 4090 GPU with a batch size of 64 for training and 128 for inference. Figure \ref{fig:gpuconsumption} shows that the GACL uses minimal GPU memory. Our GACL significantly reduces GPU memory usage since it requires no back-propagation, thereby detaching gradients from tensors during calculations. This characteristic allows our approach to be applied with a larger batch size without memory leaks.

\begin{figure}[!h]
    \centering
    \includegraphics[width=0.9\linewidth]{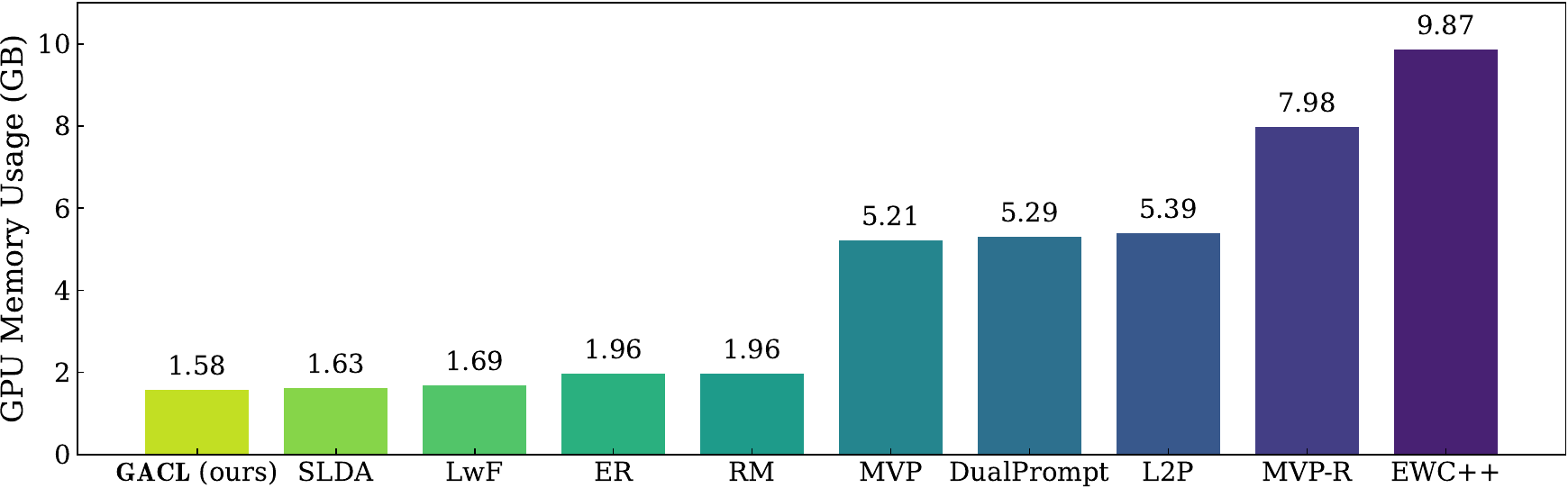}
    \vskip -0.1in
    \caption{GPU memory consumption in GB with a batch size of 64 where replay-based methods are with 2000 memory size.}
    \vskip -0.1in
    \label{fig:gpuconsumption}
\end{figure}

\textbf{Training Time}. Table \ref{tab:trainingtime} further illustrates the GACL's training time compared to others on one Nvidia Geforce RTX 4090 GPU, highlighting its efficiency. The GACL is faster than any other baselines except SLDA on three datasets because only the classifier and autocorrelation memory matrix $\bm{R}$ are updated, leading to small numbers of trainable parameters compared to those baselines in a back-propagation manner.

\begin{table}[!h]
\centering
\caption{Average Training time of 5 independent seeds in seconds (s) where replay-based methods are with 2000 memory size.}
\begin{tabular}{lcccc}
\toprule
     \multicolumn{1}{c}{Method} & EFCIL & CIFAR-100 (s) & ImageNet-R (s) & Tiny-ImageNet (s) \\
     \midrule
      RM \cite{RM2021CVPR} & \ding{53} & >2 days & >2 days & >2 days \\
      MVP-R \cite{Siblurry2023ICCV} & \ding{53} & 717 & 527 & 1597 \\
      ER \cite{ER_NEURIPS2019_} & \ding{53} & 369 & 330 & 715 \\
      EWC++ \cite{EWC_2017_PNAS} & \ding{53} & 650 & 391 & 1356 \\
      LwF \cite{LwF2018TPAMI} & {\color{red}$\checkmark$} & 334 & 229 & 862 \\
      L2P \cite{L2P_2022_CVPR} & {\color{red}$\checkmark$} & 651 & 285 & 1246 \\
      DualPrompt \cite{dualprompt_2022} & {\color{red}$\checkmark$} & 656 & 332 & 1294 \\
      MVP \cite{Siblurry2023ICCV} & {\color{red}$\checkmark$} & 628 & 300 & 1345 \\
      SLDA \cite{SLDA_2020_CVPR_Workshops} & {\color{red}$\checkmark$} & 401 & 284 & 915 \\
      \textbf{GACL} (ours) & {\color{red}$\checkmark$} & \textbf{611} & \textbf{321} & \textbf{1246} \\
\bottomrule
\end{tabular}

\label{tab:trainingtime}
\end{table}

\section{Hyperparameter Analysis for Regularization Term}
\label{app:gamma}

\begin{table}[!h]
    \centering
    \renewcommand{\arraystretch}{1.4}
    \caption{$\mathcal A_{\text{AUC}}$, $\mathcal A_{\text{Avg}}$, and $\mathcal A_{\text{Last}}$ of the GACL on all benchmark datasets with various values of the regularization term $\gamma$.}\label{tab:gamma}
\resizebox{1\textwidth}{!}{
    \begin{tabular}{l ccc ccc ccc}
    \toprule
    \multirow{2}{*}{$\gamma$} & \multicolumn{3}{c}{CIFAR-100 ($\%$)}  & \multicolumn{3}{c}{ImageNet-R ($\%$)} &   \multicolumn{3}{c}{Tiny-ImageNet ($\%$)} \\ \cmidrule(lr){2-4} \cmidrule(lr){5-7} \cmidrule(lr){8-10}
    &  $\mathcal A_{\text{AUC}}$  & $\mathcal A_{\text{Avg}}$  & $\mathcal A_{\text{Last}}$    &   $\mathcal A_{\text{AUC}}$  & $\mathcal A_{\text{Avg}}$  & $\mathcal A_{\text{Last}}$  & $\mathcal A_{\text{AUC}}$  & $\mathcal A_{\text{Avg}}$  & $\mathcal A_{\text{Last}}$  \\ \hline 
0 &\valuepm{8.87}{4.96}&\valuepm{9.83}{5.82}&\valuepm{8.65}{6.47} & \valuepm{2.03}{0.36} &\valuepm{2.85}{0.86} &\valuepm{0.71}{0.09}&\valuepm{4.38}{2.17} &\valuepm{6.14}{4.01} &\valuepm{0.62}{0.11} 
   \\ 
10 & \valuepm{57.57}{2.35} 
&\valuepm{55.97}{3.22}
&\textbf{\valuepm{70.45}{0.08}} &  \valuepm{38.65}{0.69} &\valuepm{44.38}{0.83} &\valuepm{41.96}{0.10}  & \valuepm{62.74}{0.64}&\valuepm{69.24}{0.79}&\valuepm{62.73}{0.09}\\

 100& \textbf{\valuepm{57.99}{2.46}} &\textbf{\valuepm{56.24}{3.12}} &\valuepm{70.31}{0.06}& \valuepm{41.68}{0.78} &\valuepm{47.30}{0.84} &\valuepm{42.22}{0.10}
 &\textbf{\valuepm{63.14}{0.66}}&\textbf{\valuepm{69.32}{0.87}}
 &\textbf{\valuepm{62.68}{0.08}} \\

 500 &\valuepm{56.98}{2.61} 
 & \valuepm{55.46}{3.23} 
 & \valuepm{70.00}{0.02} & \textbf{\valuepm{42.92}{0.79}} & \textbf{\valuepm{49.01}{0.85}} & \textbf{\valuepm{42.70}{0.14}}
& \valuepm{62.90}{0.67} 
& \valuepm{68.95}{0.88} & \valuepm{62.41}{0.09} \\
   
1000& \valuepm{56.03}{2.70} &\valuepm{54.76}{3.31}&\valuepm{69.61}{0.08} & \valuepm{42.69}{0.80} & \valuepm{48.90}{0.90} &\valuepm{42.67}{0.16} &\valuepm{61.96}{0.67}&\valuepm{68.48}{0.83}&\valuepm{62.10}{0.07}\\
   
10000&\valuepm{51.01}{3.04}&\valuepm{50.92}{3.62}&\valuepm{66.38}{0.07}&\valuepm{38.55}{0.85}
&\valuepm{45.16}{0.84} &\valuepm{40.10}{0.19}
&\valuepm{57.54}{0.74}
&\valuepm{65.21}{0.70} &\valuepm{59.55}{0.07} \\
 \bottomrule
    \end{tabular}}
\end{table}

The regularization term $\gamma$ plays a crucial role and demonstrates robust behavior throughout our experiments. We assess the impact of the regularization term $\gamma$ in Table \ref{tab:gamma} and visualize the real-time accuracy of the GACL as it learns from training samples in Figure \ref{fig:gamma}. Table \ref{tab:gamma} reveals the GACL's consistent performance across a broad range of $\gamma$ values, spanning from 10 to 10000. This highlights the versatility and robustness of our proposed GACL. However, as indicated in Figure \ref{fig:gamma}, $\gamma$ of 10000 leads to slightly poorer performance because the ACL is prone to underfitting due to simple linear regression \cite{CPNet_2021_NPL}.

\begin{figure}[!h]
    \centering
    \includegraphics[width=\linewidth]{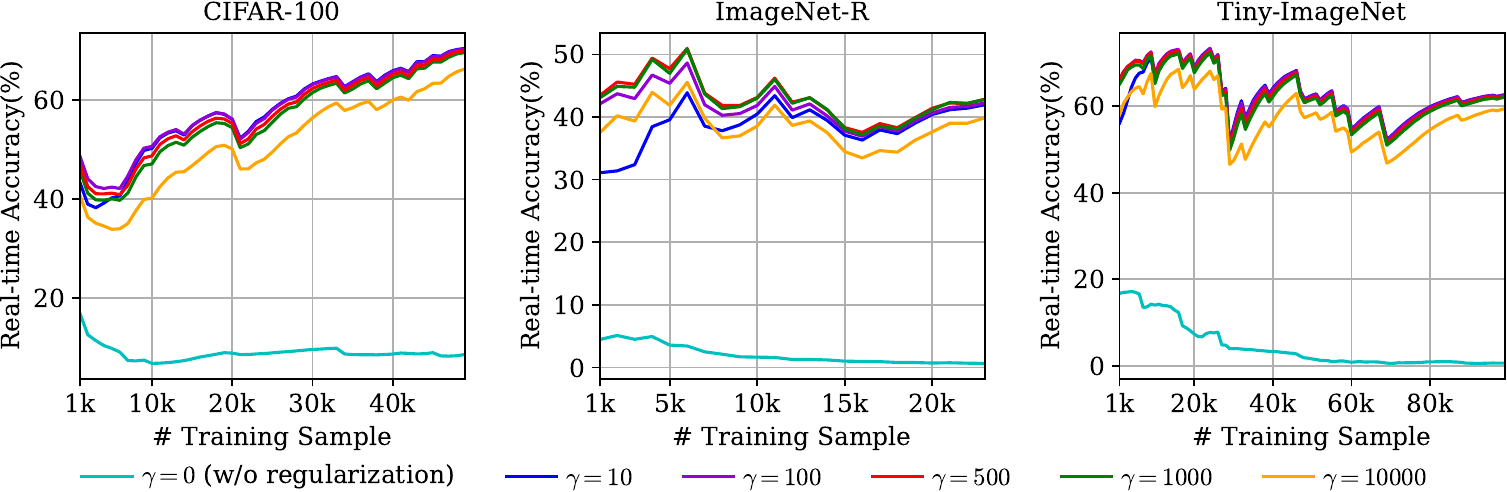}
    \vskip -0.1in
    \caption{Real-time accuracy of the GACL on all benchmark datasets with various values of the regularization term $\gamma$.}
    \vskip -0.1in
    \label{fig:gamma}
\end{figure}

Notably, both Table \ref{tab:gamma} and Figure \ref{fig:gamma} demonstrate that the absence of regularization results in a significant decline in performance. This underscores the crucial importance of incorporating $\gamma$ in the model. As indicated in \eqref{eq_R_k-1}, if we eliminate regularization by setting $\gamma$ to 0, the initial autocorrelation memory matrix $\bm{R}_0$ becomes zero. Subsequently, the computation of the autocorrelation memory matrix in task 1, denoted as $\bm{R}_1$, is expressed as:
\begin{align*}
\bm{R}_1 = (\bm{X}_1^{\text{total}\T} \bm{X}_{1}^{\text{total}})^{-1}
= (\bm{X}_1^{\text{(B)}\T} \bm{X}_{1}^{\text{(B)}})^{-1}.
\end{align*}
However, it's crucial to emphasize that $\bm{X}_1^{\text{(B)}\T} \bm{X}_1^{\text{(B)}}$ might result in a singular matrix, rendering it non-invertible. This potential singularity introduces an error in calculating $\bm{R}_1$, leading to a decrease in accuracy.

\section{Analysis of task-wise Accuracy Trends of the GACL}\label{app:analysis_on_cifar}

\begin{figure}[h!]
    \centering
    \includegraphics[width=\linewidth]{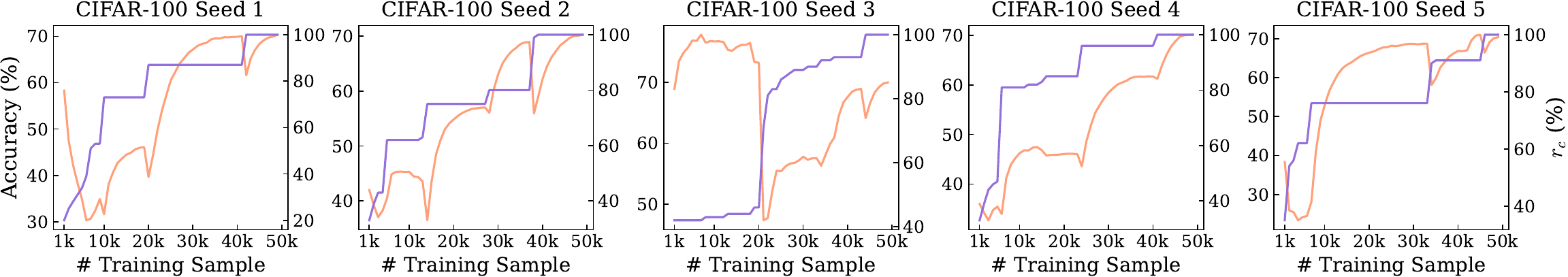} \vskip 0.1in
    \includegraphics[width=\linewidth]{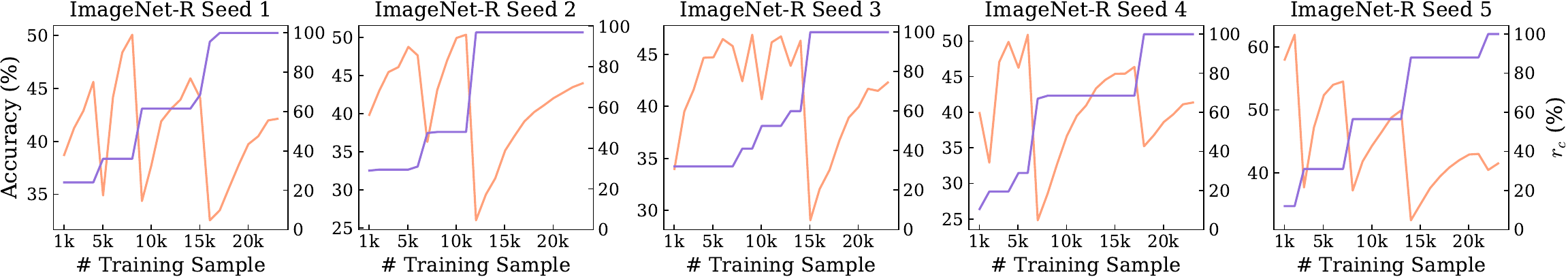} \vskip 0.1in
    \includegraphics[width=\linewidth]{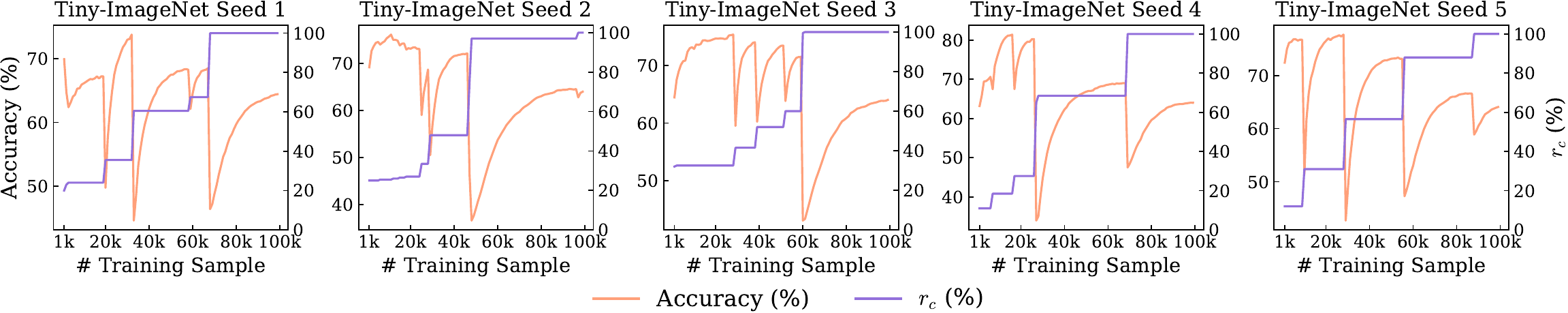} \vskip -0.1in
    \caption{Real-time accuracy and class number ratio $r_c$ on 5 independent random seeds.}\label{fig:class}
\end{figure}

As depicted in Figure \ref{fig:compare} (a), the task-wise accuracy of the GACL on CIFAR-100 demonstrates an increase. Notably, in the initial two tasks, the accuracy is lower compared to other EFCIL methods. However, on the other datasets, the GACL remains relatively stable. Upon a more detailed examination of the dataset split, we infer that the observed variations in trends are attributed to the specific dataset settings.
 
For a dataset with $N$ classes, the class number ratio $r_c$ after training on $i$-th samples is defined as $r_c = d_i / N $, where $d_i$ is the number of classes that have been seen observed at that point. As Figure \ref{fig:class} indicates, by examining the real-time accuracy and the class number ratio $r_c$ across the three sets of figures, a notable observation is made: when the sample size is small, the class number ratio $r_c$ on CIFAR-100 always surpasses that of the other two datasets on 5 seeds. This suggests that tasks on CIFAR-100 are notably more complex and intricate, resembling a few-shot learning scenario.

Consequently, the GACL exhibits lower task-wise accuracy compared to other gradient-based EFCIL methods, particularly in the initial stages. However, as more training samples are acquired, its accuracy progressively improves.

\newpage
\section*{NeurIPS Paper Checklist}

\begin{enumerate}

\item {\bf Claims}
    \item[] Question: Do the main claims made in the abstract and introduction accurately reflect the paper's contributions and scope?
    \item[] Answer: \answerYes{} 
    \item[] Justification:  All the claims are clearly clarified, including the contributions made in the paper and important assumptions and limitations.
    \item[] Guidelines:
    \begin{itemize}
        \item The answer NA means that the abstract and introduction do not include the claims made in the paper.
        \item The abstract and/or introduction should clearly state the claims made, including the contributions made in the paper and important assumptions and limitations. A No or NA answer to this question will not be perceived well by the reviewers. 
        \item The claims made should match theoretical and experimental results, and reflect how much the results can be expected to generalize to other settings. 
        \item It is fine to include aspirational goals as motivation as long as it is clear that these goals are not attained by the paper. 
    \end{itemize}

\item {\bf Limitations}
    \item[] Question: Does the paper discuss the limitations of the work performed by the authors?
    \item[] Answer: \answerYes{}
    \item[] Justification: In Section \ref{Sec:limit}, a discussion of the limitations and the future work of the GACL is conducted.
    \item[] Guidelines:
    \begin{itemize}
        \item The answer NA means that the paper has no limitation while the answer No means that the paper has limitations, but those are not discussed in the paper. 
        \item The authors are encouraged to create a separate "Limitations" section in their paper.
        \item The paper should point out any strong assumptions and how robust the results are to violations of these assumptions (e.g., independence assumptions, noiseless settings, model well-specification, asymptotic approximations only holding locally). The authors should reflect on how these assumptions might be violated in practice and what the implications would be.
        \item The authors should reflect on the scope of the claims made, e.g., if the approach was only tested on a few datasets or with a few runs. In general, empirical results often depend on implicit assumptions, which should be articulated.
        \item The authors should reflect on the factors that influence the performance of the approach. For example, a facial recognition algorithm may perform poorly when image resolution is low or images are taken in low lighting. Or a speech-to-text system might not be used reliably to provide closed captions for online lectures because it fails to handle technical jargon.
        \item The authors should discuss the computational efficiency of the proposed algorithms and how they scale with dataset size.
        \item If applicable, the authors should discuss possible limitations of their approach to address problems of privacy and fairness.
        \item While the authors might fear that complete honesty about limitations might be used by reviewers as grounds for rejection, a worse outcome might be that reviewers discover limitations that aren't acknowledged in the paper. The authors should use their best judgment and recognize that individual actions in favor of transparency play an important role in developing norms that preserve the integrity of the community. Reviewers will be specifically instructed to not penalize honesty concerning limitations.
    \end{itemize}

\item {\bf Theory Assumptions and Proofs}
    \item[] Question: For each theoretical result, does the paper provide the full set of assumptions and a complete (and correct) proof?
    \item[] Answer: \answerYes{} 
    \item[] Justification: The proof of the Theorem \ref{thm:gef_1} is listed in Appendix \ref{app:proof_of_the_theorem}.
    \item[] Guidelines:
    \begin{itemize}
        \item The answer NA means that the paper does not include theoretical results. 
        \item All the theorems, formulas, and proofs in the paper should be numbered and cross-referenced.
        \item All assumptions should be clearly stated or referenced in the statement of any theorems.
        \item The proofs can either appear in the main paper or the supplemental material, but if they appear in the supplemental material, the authors are encouraged to provide a short proof sketch to provide intuition. 
        \item Inversely, any informal proof provided in the core of the paper should be complemented by formal proofs provided in appendix or supplemental material.
        \item Theorems and Lemmas that the proof relies upon should be properly referenced. 
    \end{itemize}

    \item {\bf Experimental Result Reproducibility}
    \item[] Question: Does the paper fully disclose all the information needed to reproduce the main experimental results of the paper to the extent that it affects the main claims and/or conclusions of the paper (regardless of whether the code and data are provided or not)?
    \item[] Answer: \answerYes{} 
    \item[] Justification: Our paper comprehensively outlines both the experimental implementation and algorithmic details, ensuring transparency in our method.
    \item[] Guidelines:
    \begin{itemize}
        \item The answer NA means that the paper does not include experiments.
        \item If the paper includes experiments, a No answer to this question will not be perceived well by the reviewers: Making the paper reproducible is important, regardless of whether the code and data are provided or not.
        \item If the contribution is a dataset and/or model, the authors should describe the steps taken to make their results reproducible or verifiable. 
        \item Depending on the contribution, reproducibility can be accomplished in various ways. For example, if the contribution is a novel architecture, describing the architecture fully might suffice, or if the contribution is a specific model and empirical evaluation, it may be necessary to either make it possible for others to replicate the model with the same dataset, or provide access to the model. In general. releasing code and data is often one good way to accomplish this, but reproducibility can also be provided via detailed instructions for how to replicate the results, access to a hosted model (e.g., in the case of a large language model), releasing of a model checkpoint, or other means that are appropriate to the research performed.
        \item While NeurIPS does not require releasing code, the conference does require all submissions to provide some reasonable avenue for reproducibility, which may depend on the nature of the contribution. For example
        \begin{enumerate}
            \item If the contribution is primarily a new algorithm, the paper should make it clear how to reproduce that algorithm.
            \item If the contribution is primarily a new model architecture, the paper should describe the architecture clearly and fully.
            \item If the contribution is a new model (e.g., a large language model), then there should either be a way to access this model for reproducing the results or a way to reproduce the model (e.g., with an open-source dataset or instructions for how to construct the dataset).
            \item We recognize that reproducibility may be tricky in some cases, in which case authors are welcome to describe the particular way they provide for reproducibility. In the case of closed-source models, it may be that access to the model is limited in some way (e.g., to registered users), but it should be possible for other researchers to have some path to reproducing or verifying the results.
        \end{enumerate}
    \end{itemize}

\item {\bf Open access to data and code}
    \item[] Question: Does the paper provide open access to the data and code, with sufficient instructions to faithfully reproduce the main experimental results, as described in supplemental material?
    \item[] Answer: \answerYes{} 
    \item[] Justification: All datasets are publicly accessible, and we have provided the source code.
    \item[] Guidelines:
    \begin{itemize}
        \item The answer NA means that paper does not include experiments requiring code.
        \item Please see the NeurIPS code and data submission guidelines (\url{https://nips.cc/public/guides/CodeSubmissionPolicy}) for more details.
        \item While we encourage the release of code and data, we understand that this might not be possible, so “No” is an acceptable answer. Papers cannot be rejected simply for not including code, unless this is central to the contribution (e.g., for a new open-source benchmark).
        \item The instructions should contain the exact command and environment needed to run to reproduce the results. See the NeurIPS code and data submission guidelines (\url{https://nips.cc/public/guides/CodeSubmissionPolicy}) for more details.
        \item The authors should provide instructions on data access and preparation, including how to access the raw data, preprocessed data, intermediate data, and generated data, etc.
        \item The authors should provide scripts to reproduce all experimental results for the new proposed method and baselines. If only a subset of experiments are reproducible, they should state which ones are omitted from the script and why.
        \item At submission time, to preserve anonymity, the authors should release anonymized versions (if applicable).
        \item Providing as much information as possible in supplemental material (appended to the paper) is recommended, but including URLs to data and code is permitted.
    \end{itemize}

\item {\bf Experimental Setting/Details}
    \item[] Question: Does the paper specify all the training and test details (e.g., data splits, hyperparameters, how they were chosen, type of optimizer, etc.) necessary to understand the results?
    \item[] Answer: \answerYes{} 
    \item[] Justification: We have reported all the necessary details of our experiment.
    \item[] Guidelines:
    \begin{itemize}
        \item The answer NA means that the paper does not include experiments.
        \item The experimental setting should be presented in the core of the paper to a level of detail that is necessary to appreciate the results and make sense of them.
        \item The full details can be provided either with the code, in appendix, or as supplemental material.
    \end{itemize}

\item {\bf Experiment Statistical Significance}
    \item[] Question: Does the paper report error bars suitably and correctly defined or other appropriate information about the statistical significance of the experiments?
    \item[] Answer: \answerYes{}
    \item[] Justification: Results in this paper are reported by the average of 5 different seeds with standard error.
    \item[] Guidelines:
    \begin{itemize}
        \item The answer NA means that the paper does not include experiments.
        \item The authors should answer "Yes" if the results are accompanied by error bars, confidence intervals, or statistical significance tests, at least for the experiments that support the main claims of the paper.
        \item The factors of variability that the error bars are capturing should be clearly stated (for example, train/test split, initialization, random drawing of some parameter, or overall run with given experimental conditions).
        \item The method for calculating the error bars should be explained (closed form formula, call to a library function, bootstrap, etc.)
        \item The assumptions made should be given (e.g., Normally distributed errors).
        \item It should be clear whether the error bar is the standard deviation or the standard error of the mean.
        \item It is OK to report 1-sigma error bars, but one should state it. The authors should preferably report a 2-sigma error bar than state that they have a 96\% CI, if the hypothesis of Normality of errors is not verified.
        \item For asymmetric distributions, the authors should be careful not to show in tables or figures symmetric error bars that would yield results that are out of range (e.g. negative error rates).
        \item If error bars are reported in tables or plots, The authors should explain in the text how they were calculated and reference the corresponding figures or tables in the text.
    \end{itemize}

\item {\bf Experiments Compute Resources}
    \item[] Question: For each experiment, does the paper provide sufficient information on the computer resources (type of compute workers, memory, time of execution) needed to reproduce the experiments?
    \item[] Answer: \answerYes{} 
    \item[] Justification: The information on the computer resources for our GACL is listed in Appendix \ref{app:compute}.
    \item[] Guidelines:
    \begin{itemize}
        \item The answer NA means that the paper does not include experiments.
        \item The paper should indicate the type of compute workers CPU or GPU, internal cluster, or cloud provider, including relevant memory and storage.
        \item The paper should provide the amount of compute required for each of the individual experimental runs as well as estimate the total compute. 
        \item The paper should disclose whether the full research project required more compute than the experiments reported in the paper (e.g., preliminary or failed experiments that didn't make it into the paper). 
    \end{itemize}
    
\item {\bf Code Of Ethics}
    \item[] Question: Does the research conducted in the paper conform, in every respect, with the NeurIPS Code of Ethics \url{https://neurips.cc/public/EthicsGuidelines}?
    \item[] Answer: \answerYes{} 
    \item[] Justification: The paper fully complies with the NeurIPS Code of Ethics.
    \item[] Guidelines:
    \begin{itemize}
        \item The answer NA means that the authors have not reviewed the NeurIPS Code of Ethics.
        \item If the authors answer No, they should explain the special circumstances that require a deviation from the Code of Ethics.
        \item The authors should make sure to preserve anonymity (e.g., if there is a special consideration due to laws or regulations in their jurisdiction).
    \end{itemize}

\item {\bf Broader Impacts}
    \item[] Question: Does the paper discuss both potential positive societal impacts and negative societal impacts of the work performed?
    \item[] Answer: \answerNA{} 
    \item[] Justification: There is no societal impact of the work performed.
    \item[] Guidelines:
    \begin{itemize}
        \item The answer NA means that there is no societal impact of the work performed.
        \item If the authors answer NA or No, they should explain why their work has no societal impact or why the paper does not address societal impact.
        \item Examples of negative societal impacts include potential malicious or unintended uses (e.g., disinformation, generating fake profiles, surveillance), fairness considerations (e.g., deployment of technologies that could make decisions that unfairly impact specific groups), privacy considerations, and security considerations.
        \item The conference expects that many papers will be foundational research and not tied to particular applications, let alone deployments. However, if there is a direct path to any negative applications, the authors should point it out. For example, it is legitimate to point out that an improvement in the quality of generative models could be used to generate deepfakes for disinformation. On the other hand, it is not needed to point out that a generic algorithm for optimizing neural networks could enable people to train models that generate Deepfakes faster.
        \item The authors should consider possible harms that could arise when the technology is being used as intended and functioning correctly, harms that could arise when the technology is being used as intended but gives incorrect results, and harms following from (intentional or unintentional) misuse of the technology.
        \item If there are negative societal impacts, the authors could also discuss possible mitigation strategies (e.g., gated release of models, providing defenses in addition to attacks, mechanisms for monitoring misuse, mechanisms to monitor how a system learns from feedback over time, improving the efficiency and accessibility of ML).
    \end{itemize}
    
\item {\bf Safeguards}
    \item[] Question: Does the paper describe safeguards that have been put in place for responsible release of data or models that have a high risk for misuse (e.g., pretrained language models, image generators, or scraped datasets)?
    \item[] Answer: \answerNA{} 
    \item[] Justification: The models and benchmark datasets mentioned in the paper are all openly accessible with no personally identifiable information or offensive content.
    \item[] Guidelines:
    \begin{itemize}
        \item The answer NA means that the paper poses no such risks.
        \item Released models that have a high risk for misuse or dual-use should be released with necessary safeguards to allow for controlled use of the model, for example by requiring that users adhere to usage guidelines or restrictions to access the model or implementing safety filters. 
        \item Datasets that have been scraped from the Internet could pose safety risks. The authors should describe how they avoided releasing unsafe images.
        \item We recognize that providing effective safeguards is challenging, and many papers do not require this, but we encourage authors to take this into account and make a best faith effort.
    \end{itemize}

\item {\bf Licenses for existing assets}
    \item[] Question: Are the creators or original owners of assets (e.g., code, data, models), used in the paper, properly credited and are the license and terms of use explicitly mentioned and properly respected?
    \item[] Answer: \answerYes{} 
    \item[] Justification: We have verified that this paper cites all the datasets and models we used.
    \item[] Guidelines:
    \begin{itemize}
        \item The answer NA means that the paper does not use existing assets.
        \item The authors should cite the original paper that produced the code package or dataset.
        \item The authors should state which version of the asset is used and, if possible, include a URL.
        \item The name of the license (e.g., CC-BY 4.0) should be included for each asset.
        \item For scraped data from a particular source (e.g., website), the copyright and terms of service of that source should be provided.
        \item If assets are released, the license, copyright information, and terms of use in the package should be provided. For popular datasets, \url{paperswithcode.com/datasets} has curated licenses for some datasets. Their licensing guide can help determine the license of a dataset.
        \item For existing datasets that are re-packaged, both the original license and the license of the derived asset (if it has changed) should be provided.
        \item If this information is not available online, the authors are encouraged to reach out to the asset's creators.
    \end{itemize}

\item {\bf New Assets}
    \item[] Question: Are new assets introduced in the paper well documented and is the documentation provided alongside the assets?
    \item[] Answer: \answerYes{} 
    \item[] Justification: We have made our source code publicly available at \url{https://github.com/CHEN-YIZHU/GACL}. 
    \item[] Guidelines:
    \begin{itemize}
        \item The answer NA means that the paper does not release new assets.
        \item Researchers should communicate the details of the dataset/code/model as part of their submissions via structured templates. This includes details about training, license, limitations, etc. 
        \item The paper should discuss whether and how consent was obtained from people whose asset is used.
        \item At submission time, remember to anonymize your assets (if applicable). You can either create an anonymized URL or include an anonymized zip file.
    \end{itemize}

\item {\bf Crowdsourcing and Research with Human Subjects}
    \item[] Question: For crowdsourcing experiments and research with human subjects, does the paper include the full text of instructions given to participants and screenshots, if applicable, as well as details about compensation (if any)? 
    \item[] Answer: \answerNA{} 
    \item[] Justification: The paper does not involve crowdsourcing nor research with human subjects.
    \item[] Guidelines:
    \begin{itemize}
        \item The answer NA means that the paper does not involve crowdsourcing nor research with human subjects.
        \item Including this information in the supplemental material is fine, but if the main contribution of the paper involves human subjects, then as much detail as possible should be included in the main paper. 
        \item According to the NeurIPS Code of Ethics, workers involved in data collection, curation, or other labor should be paid at least the minimum wage in the country of the data collector. 
    \end{itemize}

\item {\bf Institutional Review Board (IRB) Approvals or Equivalent for Research with Human Subjects}
    \item[] Question: Does the paper describe potential risks incurred by study participants, whether such risks were disclosed to the subjects, and whether Institutional Review Board (IRB) approvals (or an equivalent approval/review based on the requirements of your country or institution) were obtained?
    \item[] Answer: \answerNA{} 
    \item[] Justification: The paper does not involve crowdsourcing nor research with human subjects.
    \item[] Guidelines:
    \begin{itemize}
        \item The answer NA means that the paper does not involve crowdsourcing nor research with human subjects.
        \item Depending on the country in which research is conducted, IRB approval (or equivalent) may be required for any human subjects research. If you obtained IRB approval, you should clearly state this in the paper. 
        \item We recognize that the procedures for this may vary significantly between institutions and locations, and we expect authors to adhere to the NeurIPS Code of Ethics and the guidelines for their institution. 
        \item For initial submissions, do not include any information that would break anonymity (if applicable), such as the institution conducting the review.
    \end{itemize}

\end{enumerate}

\end{document}